\renewcommand{\vec}[1]{\bm{#1}}
\newcommand{\norm}[1]{\vert \vert#1\vert \vert}
\DeclareMathOperator*{\argmin}{\arg\!\min}
\newcommand{\prox}{\text{prox}}
\renewcommand{\L}{\mathcal{L}}
\newtheorem{proposition}{Proposition}
\newcommand{\bbR}{\mathbb{R}}
\newcommand{\textcirclednice}[1]{\raisebox{.3pt}{\textcircled{\raisebox{-.4pt} {#1}}}}
\newcommand{\Lip}{\mathscr{L}}
\renewcommand\cite{\citep}
\title{Proximal Backpropagation}
\author{Thomas Frerix${^1}$\thanks{ contributed equally} \ , Thomas M\"ollenhoff${^1}\footnotemark[1]$ \ , Michael Moeller${^2}\footnotemark[1]$ \ , Daniel Cremers${^1}$ \vspace{0.2cm} \\
    \texttt{thomas.frerix@tum.de}\\
    \texttt{thomas.moellenhoff@tum.de} \\
    \texttt{michael.moeller@uni-siegen.de} \\
    \texttt{cremers@tum.de}
    \vspace{0.2cm} \\
${^1}$ Technical University of Munich\\
${^2}$ University of Siegen
}
\begin{document}

\maketitle

\begin{abstract}
We propose proximal backpropagation (ProxProp) as a novel algorithm that takes \textit{implicit} instead of explicit gradient steps to update the network parameters during neural network training.
Our algorithm is motivated by the step size limitation of explicit gradient descent, which poses an impediment for optimization.
ProxProp is developed from a general point of view on the backpropagation algorithm, currently the most common technique to train neural networks via stochastic gradient descent and variants thereof.
Specifically, we show that backpropagation of a prediction error is equivalent to sequential gradient descent steps on a quadratic penalty energy, which comprises the network activations as variables of the optimization.
We further analyze theoretical properties of ProxProp and in particular prove that the algorithm yields a descent direction in parameter space and can therefore be combined with a wide variety of convergent algorithms.
Finally, we devise an efficient numerical implementation that integrates well with popular deep learning frameworks.
We conclude by demonstrating promising numerical results and show that ProxProp can be effectively combined with common first order optimizers such as Adam. 
\end{abstract}

\section{Introduction}
In recent years neural networks have gained considerable attention in
solving difficult correlation tasks such as classification in computer
vision \cite{Krizhevsky2012} or sequence learning \cite{Sutskever2014}
and as building blocks of larger learning systems \cite{Silver2016}.
Training neural networks is accomplished by optimizing a nonconvex, possibly nonsmooth, nested function of the network parameters. 
Since the introduction of stochastic gradient descent (SGD) \cite{Robbins1951,Bottou1991}, several more sophisticated optimization methods have been studied.
One such class is that of quasi-Newton methods, as for example the
comparison of L-BFGS with SGD in \cite{Le2011a}, Hessian-free
approaches \cite{martens2010deep}, and the Sum of Functions Optimizer in \cite{Sohl-Dickstein2014}.
Several works consider specific properties of energy landscapes of deep learning models such as frequent saddle points \cite{Dauphin2014} and well-generalizable local optima \cite{Chaudhari2016}. 
Among the most popular optimization methods in currently used deep learning frameworks are momentum based improvements of classical SGD, notably Nesterov's Accelerated Gradient \cite{Nesterov1983, Sutskever2013}, and the Adam optimizer \cite{Kingma2015a}, which uses estimates of first and second order moments of the gradients for parameter updates.

Nevertheless, the optimization of these models remains challenging, as learning with SGD and its variants requires careful weight initialization and a sufficiently small learning rate in order to yield a stable and convergent algorithm. 
Moreover, SGD often has difficulties in propagating a learning signal deeply into a network, commonly referred to as the vanishing gradient problem \cite{hochreiter2001gradient}.

Training neural networks can be formulated as a constrained optimization problem by explicitly introducing the network activations as variables of the optimization, which are coupled via layer-wise constraints to enforce a feasible network configuration.
The authors of \cite{Carreira-Perpinan2014} have tackled this problem with a quadratic penalty approach, the method of auxiliary coordinates (MAC). 
Closely related, \cite{Taylor2016} introduce additional auxiliary variables to further split linear and nonlinear transfer between layers and propose a primal dual algorithm for optimization.
From a different perspective, \cite{LeCun1988} takes a Lagrangian approach to formulate the constrained optimization problem. 

In this work, we start from a constrained optimization point of view on the classical backpropagation algorithm. 
We show that backpropagation can be interpreted as a method alternating between two steps. 
First, a forward pass of the data with the current network weights.
Secondly, an ordered sequence of gradient descent steps on a quadratic penalty energy.

Using this point of view, instead of taking explicit gradient steps to update the network parameters associated with the \textit{linear} transfer functions, we propose to use implicit gradient steps (also known as proximal steps, for the definition see \eqref{eq:proximalPointAlgorithm}). We prove that such a model yields a descent direction and can therefore be used in a wide variety of (provably convergent) algorithms under weak assumptions. Since an exact proximal step may be costly, we further consider a matrix-free conjugate gradient (CG) approximation, which can directly utilize the efficient pre-implemented forward and backward operations of any deep learning framework. We prove that this approximation still yields a descent direction and demonstrate the effectiveness of the proposed approach in PyTorch. 

\section{Model and notation}
We propose a method to train a general $L$-layer neural network of the functional form
\begin{equation}
  \begin{aligned}
J(\vec{\theta}; X, y) &= \L_y(\phi(\theta_{L-1}, \sigma(\phi(\cdots ,\sigma(\phi(\theta_1,X)) \cdots )).
\end{aligned}
\label{eq:network_loss}
\end{equation}
Here, $J(\vec{\theta}; X, y) $ denotes the training loss as a function of the network parameters $\vec{\theta}$, the input data $X$ and the training targets $y$.
As the final loss function $\L_y$ we choose the softmax cross-entropy for our classification experiments.
$\phi$ is a linear transfer function and $\sigma$ an elementwise nonlinear transfer function.
As an example, for fully-connected neural networks $\theta = (W,b)$ and $\phi(\theta,a) = Wa + b \mathbf{1}$.
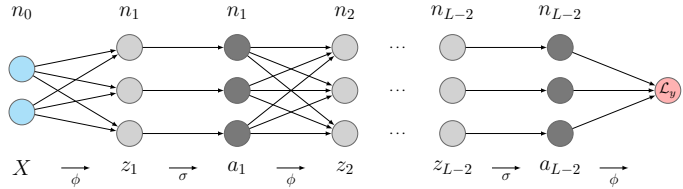
\begin{figure}[t!]
    \centering
    \resizebox{0.7\linewidth}{!}{\def\layersep{2.5cm}
\tikzset{>=latex}

\begin{tikzpicture}[->, draw=black, node distance=\layersep]
    \tikzstyle{every pin edge}=[<-]
    \tikzstyle{neuron}=[circle, draw=black!60, fill=black!25,minimum size=17pt,inner sep=0pt]
    \tikzstyle{input neuron}=[neuron, fill=cyan!30];
    \tikzstyle{output neuron}=[neuron, fill=red!30];
    \tikzstyle{linear activations}=[neuron, fill=lightgray!70];
    \tikzstyle{nonlinear activations}=[neuron, fill=darkgray!70];
    \tikzstyle{annot} = [text width=4.5em, text centered, font=\Large]
    
    \foreach \name / \y in {1,...,2}
        \node[input neuron] (I-\name) at (0,-\y) {};

    \foreach \name / \y in {1,...,3}
        \path[yshift=0.5cm]
            node[linear activations] (z1-\name) at (\layersep,-\y cm) {};	    

    \foreach \name / \y in {1,...,3}
        \path[yshift=0.5cm]
            node[nonlinear activations] (a1-\name) at (2*\layersep,-\y cm) {};

    \foreach \name / \y in {1,...,3}
        \path[yshift=0.5cm]
            node[linear activations] (z2-\name) at (3*\layersep,-\y cm) {};

    \node at (3.5*\layersep,-0.5 cm) {\ldots};
    \node at (3.5*\layersep,-1.5 cm) {\ldots};
    \node at (3.5*\layersep,-2.5 cm) {\ldots};

    \foreach \name / \y in {1,...,3}
        \path[yshift=0.5cm]
            node[linear activations] (z3-\name) at (4*\layersep,-\y cm) {};

    \foreach \name / \y in {1,...,3}
        \path[yshift=0.5cm]
            node[nonlinear activations] (a2-\name) at (5*\layersep,-\y cm) {};

	    \node[output neuron, right of=a2-2] (O) {$\mathcal{L}_y$};

    \foreach \source in {1,...,2}
        \foreach \dest in {1,...,3}
            \path (I-\source) edge (z1-\dest);

    \foreach \neuron in {1,...,3}
            \path (z1-\neuron) edge (a1-\neuron);

    \foreach \neuron in {1,...,3}
            \path (z3-\neuron) edge (a2-\neuron);

    \foreach \source in {1,...,3}
        \foreach \dest in {1,...,3}
            \path (a1-\source) edge (z2-\dest);

    \foreach \source in {1,...,3}
        \path (a2-\source) edge (O);

    \node[annot,above of=I-1, node distance=1.3cm] {$n_0$};
    \node[annot,below of=I-2, node distance=1.3cm] (I-annot-below) {$X$};
    \node[annot,above of=z1-1, node distance=0.8cm] {$n_1$};
    \node[annot,below of=z1-3, node distance=0.8cm] (z1-annot-below) {$z_{1}$};
    \draw[<-] (z1-annot-below) -- node [below] {$\phi$} (I-annot-below);
    \node[annot,above of=a1-1, node distance=0.8cm] {$n_1$};
    \node[annot,below of=a1-3, node distance=0.8cm] (a1-annot-below) {$a_{1}$};
    \draw[<-] (a1-annot-below) -- node [below] {$\sigma$} (z1-annot-below);
    \node[annot,above of=z2-1, node distance=0.8cm] {$n_2$};
    \node[annot,below of=z2-3, node distance=0.8cm] (z2-annot-below) {$z_{2}$};
    \draw[<-] (z2-annot-below) -- node [below] {$\phi$} (a1-annot-below);
    \node[annot,above of=z3-1, node distance=0.8cm] {$n_{L-2}$};
    \node[annot,below of=z3-3, node distance=0.8cm] (z3-annot-below) {$z_{L-2}$};
    \node[annot,above of=a2-1, node distance=0.8cm] {$n_{L-2}$};
    \node[annot,below of=a2-3, node distance=0.8cm] (a2-annot-below) {$a_{L-2}$};
    \draw[<-] (a2-annot-below) -- node [below] {$\sigma$} (z3-annot-below);
    \node[annot,below of=O, node distance=1.8cm] (out-annot-below) {};
    \draw[<-] (out-annot-below) -- node [below] {$\phi$} (a2-annot-below);

\end{tikzpicture}}
  \caption{Notation overview. For an $L$-layer feed-forward network we denote the explicit layer-wise activation variables as $z_l$ and $a_l$. The transfer
    functions are denoted as $\phi$ and $\sigma$. Layer $l$ is of size~$n_l$.}
  \label{fig:network_diagram}
\end{figure}

While we assume the nonlinearities $\sigma$ to be continuously differentiable functions for analysis purposes, our numerical experiments indicate that the proposed scheme extends to rectified linear units (ReLU), $\sigma(x) = \max(0,x)$. 
Formally, the functions $\sigma$ and $\phi$ map between spaces of different dimensions depending on the layer. 
However, to keep the presentation clean, we do not state this dependence explicitly. Figure~\ref{fig:network_diagram} illustrates our notation for the fully-connected network architecture. 

Throughout this paper, we denote the Euclidean norm for vectors and the Frobenius norm for matrices by $\norm{\cdot}$, induced by an inner product $\langle \cdot, \cdot \rangle$. 
We use the gradient symbol $\nabla$ to denote the transpose of the Jacobian matrix, such that the chain rule applies in the form ``inner derivative times outer derivative''. For all computations involving matrix-valued functions and their gradient/Jacobian, we uniquely identify all involved quantities with their vectorized form by flattening matrices in a column-first order.
Furthermore, we denote by $A^*$ the adjoint of a linear operator $A$.

\section{Penalty formulation of backpropagation}
The gradient descent iteration on a nested function $J(\vec{\theta}; X, y)$,
\begin{equation}
\begin{aligned}
    \vec{\theta}^{k+1} = \vec{\theta}^{k} - \tau \nabla J(\vec{\theta}^{k}; X, y),
\label{eq:gradient_descent}
\end{aligned}
\end{equation}
is commonly implemented using the backpropagation algorithm \cite{Rumelhart1986}. 
As the basis for our proposed optimization method, we derive a connection between the classical backpropagation algorithm and quadratic penalty functions of the form
\begin{align}
\label{eq:penaltyFunction}
\begin{split}
E(\vec{\theta}, \vec{a}, \vec{z}) &= \L_y(\phi(\theta_{L-1}, a_{L-2}))  
 + \sum_{l=1}^{L-2} \frac{\gamma}{2} \|\sigma(z_{l}) - a_l\|^2  
+ \frac{\rho}{2} \|\phi(\theta_l, a_{l-1}) - z_l \|^2.
\end{split}
\end{align} 
The approach of \cite{Carreira-Perpinan2014} is based on the minimization of \eqref{eq:penaltyFunction}, as 
under mild conditions the limit $\rho, \gamma \rightarrow \infty$ leads to the convergence of the sequence of minimizers of $E$ to the minimizer of $J$ \cite[Theorem~17.1]{Nocedal2006}. 
In contrast to~\cite{Carreira-Perpinan2014} we do not optimize \eqref{eq:penaltyFunction}, but rather use a connection of \eqref{eq:penaltyFunction} to the classical backpropagation algorithm 
to motivate a semi-implicit optimization algorithm for the original cost function $J$. 

Indeed, the iteration shown in Algorithm~\ref{alg:penalty_backprop} of forward passes followed by a sequential gradient descent on the penalty function $E$ is equivalent to the classical gradient descent iteration.
\begin{proposition}
Let $\mathcal{L}_y$, $\phi$ and $\sigma$ be continuously differentiable. For $\rho = \gamma = 1 / \tau$ and $ \vec{\theta}^k$ as the input to Algorithm~\ref{alg:penalty_backprop}, its output $ \vec{\theta}^{k+1}$ satisfies \eqref{eq:gradient_descent}, i.e., Algorithm \ref{alg:penalty_backprop} computes one gradient descent iteration on $J$.
\label{prop:equivalence}
\end{proposition}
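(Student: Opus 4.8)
The plan is to prove the equivalence by a backward induction through the layers, showing that each gradient-descent step of Algorithm~\ref{alg:penalty_backprop} on $E$ reproduces exactly the quantity computed by classical backpropagation, so that the resulting weight updates coincide with \eqref{eq:gradient_descent}. First I would record the backpropagation recursion for $\nabla J$. Writing $\bar z_l, \bar a_l$ for the activations produced by the forward pass with weights $\vec\theta^k$ (so $\bar a_0 = X$, $\bar z_l = \phi(\theta_l,\bar a_{l-1})$, $\bar a_l = \sigma(\bar z_l)$), backpropagation is the sequence $\delta'_{L-2} = (\partial_a\phi(\theta_{L-1},\bar a_{L-2}))^*\,\nabla\L_y$, and for $l\le L-2$, $\delta_l = \sigma'(\bar z_l)\odot\delta'_l$, $\delta'_{l-1} = (\partial_a\phi(\theta_l,\bar a_{l-1}))^*\delta_l$, with parameter gradients $\nabla_{\theta_{L-1}}J = (\partial_\theta\phi(\theta_{L-1},\bar a_{L-2}))^*\nabla\L_y$ and $\nabla_{\theta_l}J = (\partial_\theta\phi(\theta_l,\bar a_{l-1}))^*\delta_l$. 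The single observation that drives the whole argument is that after the forward pass every constraint residual $\sigma(\bar z_l)-\bar a_l$ and $\phi(\theta_l,\bar a_{l-1})-\bar z_l$ vanishes, so all of the coupling gradients of $E$ are initially zero apart from the one induced by the loss.

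The heart of the proof is a bookkeeping identity that I would verify step by step: each step displaces its variable by exactly $\tau$ times the corresponding $E$-gradient, which opens a constraint residual of size $\tau$ times an error signal; because $\rho=\gamma=1/\tau$, multiplying such a residual by $\rho$ or $\gamma$ cancels the $\tau$ and returns the error signal undistorted. Concretely, updating $a_l$ gives $\tilde a_l = \bar a_l - \tau\,\partial_{a_l}E = \bar a_l - \tau\delta'_l$ (for $l=L-2$ directly from the loss term since the $\gamma$-residual vanishes; for $l<L-2$ because the previously updated $z_{l+1}$ has created a residual $\phi(\theta_{l+1},\bar a_l)-\tilde z_{l+1} = \tau\delta_{l+1}$, and $\rho\,(\partial_a\phi(\theta_{l+1},\bar a_l))^*\,\tau\delta_{l+1} = \delta'_l$). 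Then, the $\rho$-residual at layer $l$ still being zero, $\partial_{z_l}E = \gamma\,\diag(\sigma'(\bar z_l))(\sigma(\bar z_l)-\tilde a_l) = \gamma\tau\,\sigma'(\bar z_l)\odot\delta'_l = \delta_l$, so $\tilde z_l = \bar z_l - \tau\delta_l$; and finally $\partial_{\theta_l}E = \rho\,(\partial_\theta\phi(\theta_l,\bar a_{l-1}))^*(\bar z_l-\tilde z_l) = \rho\tau\,(\partial_\theta\phi(\theta_l,\bar a_{l-1}))^*\delta_l = \nabla_{\theta_l}J$, giving $\tilde\theta_l = \theta_l^k - \tau\nabla_{\theta_l}J$. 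The base case ($\theta_{L-1}$ and $a_{L-2}$, both read off directly from the loss term at zero residual) and the inductive step have the same form, so the induction closes and yields \eqref{eq:gradient_descent} for every block of parameters.

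The step I expect to be the main obstacle is pinning down the ordering so that every gradient is evaluated at exactly the point that reproduces backpropagation rather than an $O(\tau)$ perturbation of it. Two couplings must be respected: the update of $\theta_l$ has to use the \emph{old} activation $\bar a_{l-1}$ together with the \emph{already updated} $z_l$, and the adjoint Jacobian $(\partial_a\phi(\theta_l,\bar a_{l-1}))^*$ appearing in the $a_{l-1}$-update must use the \emph{old} weights $\theta_l$. Since the $\theta_l$- and $a_{l-1}$-gradients each depend on the other variable's pre-update value, they have to be computed at a common point before either is overwritten; I would therefore make explicit that the algorithm evaluates the gradient of $E$ for the pair $(\theta_l,a_{l-1})$ together and only afterwards applies the step, and verify that this is precisely the interleaving prescribed in Algorithm~\ref{alg:penalty_backprop}. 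Once this ordering is fixed, exactness (rather than a mere first-order approximation) is seen to rely entirely on the choice $\rho=\gamma=1/\tau$, which is exactly what makes each $\rho$- or $\gamma$-multiplication cancel the $\tau$ introduced by the preceding step.
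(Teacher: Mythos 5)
Your proof is correct and follows essentially the same route as the paper's: it exploits the vanishing constraint residuals after the forward pass and the cancellation $\rho=\gamma=1/\tau$ to show that each sequential gradient step on $E$ reproduces the backpropagated error signal exactly, including the key bookkeeping point that the $\theta_l$-step must be evaluated at the forward-pass activation $a_{l-1}^k$ together with the already-updated $z_l^{k+\nicefrac{1}{2}}$ (and the $a_{l-1}$-step at the old $\theta_l^k$), which is precisely how the paper's proof evaluates these gradients. The only difference is presentational: you maintain the invariant as a backward induction with named error signals $\delta_l$, $\delta'_l$, whereas the paper unrolls the same recursion into the explicit telescoped product of Jacobians in \eqref{eq:above_formulaSupp} before inserting it into the $\theta_l$-update.
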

\begin{proof}
For this and all further proofs we refer to Appendix~\ref{sec:proofs}. 
\end{proof}

\begin{figure}
\newcommand{\Comment}[1]{\textit{// #1}}
\newcommand{\LineComment}[1]{\hfill\textit{// #1}}
\begin{minipage}[t!]{6.9cm}
  \vspace{0pt}
  \begin{algorithm}[H]
    \small
    \caption{- Penalty formulation of BackProp}
    \label{alg:penalty_backprop}
    \begin{algorithmic}
      \STATE {\bfseries Input:} Current parameters $\vec{\theta}^k$.\\[1mm]
      \STATE \Comment{Forward pass.}
      \FOR{$l=1$ {\bfseries to} $L-2$}
      \STATE $z_{l}^{k} = \phi(\theta_{l}^{k}, a_{l-1}^{k})$, \LineComment{$a_0 = X$.}
      \STATE $a_{l}^{k} = \sigma(z_l^{k})$.
      \ENDFOR\\[1mm]
      \STATE \Comment{Perform minimization steps on \eqref{eq:penaltyFunction}.}
      \STATE \textcirclednice{a} grad. step on $E$ wrt. $(\theta_{L-1}, a_{L-2})$ \label{alg2:minstep_a}
      \FOR{$l=L-2$ {\bfseries to} $1$}
      \STATE \textcirclednice{b} grad. step on $E$ wrt. $z_{l}$ and $a_{l-1}$, \label{alg2:minstep_b} 
      \STATE {\color{purple}\textcirclednice{c} grad. step on $E$ wrt. $\theta_l$. \label{alg2:minstep_c}}
      \ENDFOR\\[1mm]
      \STATE {\bfseries Output:} New parameters $\vec{\theta}^{k + 1}$.
    \end{algorithmic}
  \end{algorithm}
\end{minipage}
\begin{minipage}[t!]{6.9cm}
  \vspace{0pt}
  \begin{algorithm}[H]
    \small
    \caption{- ProxProp}
    \label{alg:proxprop}
    \begin{algorithmic}
      \STATE {\bfseries Input:} Current parameters $\vec{\theta}^k$.\\[1mm]
      \STATE \Comment{Forward pass.}
      \FOR{$l=1$ {\bfseries to} $L-2$}
      \STATE $z_{l}^{k} = \phi(\theta_{l}^{k}, a_{l-1}^{k})$, \LineComment{$a_0 = X$.}
      \STATE $a_{l}^{k} = \sigma(z_l^{k})$.
      \ENDFOR\\[1mm]
      \STATE \Comment{Perform minimization steps on \eqref{eq:penaltyFunction}.}
      \STATE \textcirclednice{a} grad. step on $E$ wrt. $(\theta_{L-1}, a_{L-2})$, Eqs.~\ref{eq:update_a_explicit},~\ref{eq:update_theta_explicit}. \label{alg2:minstep_a}
      \FOR{$l=L-2$ {\bfseries to} $1$}
      \STATE \textcirclednice{b} grad. step on $E$ wrt. $z_{l}$ and $a_{l-1}$, Eqs.~\ref{eq:update_z},~\ref{eq:update_a}. \label{alg2:minstep_b} 
      \STATE {\color{purple}\textcirclednice{c} prox step on $E$ wrt. $\theta_l$, Eq.~\ref{eq:wba_update}. \label{alg2:minstep_c}}
      \ENDFOR\\[1mm]
      \STATE {\bfseries Output:} New parameters $\vec{\theta}^{k + 1}$.
    \end{algorithmic}
  \end{algorithm}
\end{minipage}
\end{figure}

\section{Proximal backpropagation}
The interpretation of Proposition~\ref{prop:equivalence} leads to the natural idea of replacing the explicit gradient steps
 \hyperref[alg1:gradient_a]{\textcirclednice{a}}, \hyperref[alg1:gradient_b]{\textcirclednice{b}} and \hyperref[alg1:gradient_c]{\textcirclednice{c}} in Algorithm~\ref{alg:penalty_backprop} with other -- possibly more
 powerful -- minimization steps. 
 We propose Proximal Backpropagation (ProxProp) as one such algorithm that takes \textit{implicit} instead of \textit{explicit} gradient steps to update the network parameters $\theta$ in step {\textcirclednice{c}}.
This algorithm is motivated by the step size restriction of gradient descent.

\subsection{Gradient descent and proximal mappings}
\label{sec:proximal_mapping}
Explicit gradient steps pose severe restrictions on the allowed step size $\tau$: Even for a convex, twice continuously differentiable, $\Lip$-smooth function $f : \bbR^n \rightarrow \bbR$, the convergence of the gradient descent algorithm can only be guaranteed for step sizes $0 < \tau < 2/\Lip$. 
The Lipschitz constant $\Lip$ of the gradient $\nabla f$ is in this case equal to the largest eigenvalue of the Hessian $H$.
With the interpretation of backpropagation as in Proposition~\ref{prop:equivalence}, gradient steps are taken on quadratic functions.
As an example for the first layer,
\begin{align}
    f(\theta) = \frac{1}{2} \norm{\theta X - z_1}^2 \; .
    \label{eq:quadratic_example}
\end{align}
In this case the Hessian is $H = XX^\top$, which is often ill-conditioned. For the CIFAR-10 dataset the largest eigenvalue is $6.7 \cdot 10^6$, which is seven orders of magnitude larger than the smallest eigenvalue. 
Similar problems also arise in other layers where poorly conditioned matrices $a_l$ pose limitations for guaranteeing the energy $E$ to decrease.

The proximal mapping \cite{Moreau1965} of a function $f : \bbR^n \to
\bbR$ is defined as: 
\begin{equation}
  \begin{aligned}
    \prox_{\tau f}(y) := \argmin_{x \in \bbR^n} ~ f(x) + \frac{1}{2 \tau}
    \norm{x - y}^2.
  \end{aligned}
  \label{eq:proximalMapping}
\end{equation}
By rearranging the optimality conditions to \eqref{eq:proximalMapping} and taking $y = x^k$, it can be interpreted as an \emph{implicit} gradient step evaluated at the new point $x^{k+1}$ (assuming differentiability of $f$):
\begin{equation}
  \begin{aligned}
    x^{k+1} &:= \argmin_{x \in \bbR^n} ~ f(x) + \frac{1}{2 \tau} \norm{x - x^k}^2 = x^k - \tau \nabla f(x^{k+1}).
    \label{eq:proximalPointAlgorithm}
  \end{aligned}
\end{equation}
The iterative algorithm~\eqref{eq:proximalPointAlgorithm} is known as the proximal point algorithm \cite{Martinet1970}.
In contrast to explicit gradient descent this algorithm is \textit{unconditionally stable}, i.e. the update scheme \eqref{eq:proximalPointAlgorithm} monotonically decreases $f$ for any $\tau>0$, since it holds by definition of the minimizer $x^{k+1}$ that $f(x^{k+1}) + \frac{1}{2 \tau} \norm{x^{k+1} - x^k}^2 \leq f(x^k)$. 

Thus proximal mappings yield unconditionally stable subproblems in the following sense: The update in $\theta_l$ provably decreases the penalty
energy $E(\vec{\theta}, \vec{a}^k, \vec{z}^k)$ from \eqref{eq:penaltyFunction} for fixed activations $(\vec{a}^k, \vec{z}^k)$ for any choice of step size. This motivates us to use proximal steps as
depicted in Algorithm~\ref{alg:proxprop}. 

\subsection{ProxProp}
We propose to replace explicit gradient steps with proximal steps to update the network parameters of the linear transfer function. 
More precisely, after the forward pass
\begin{align}
\label{eq:forwardPass}
\begin{split}
z_{l}^{k} =&~ \phi(\theta_{l}^{k}, a_{l-1}^{k}), \\
a_l^k =&~ \sigma(z_l^k),
\end{split}
\end{align}
we keep the explicit gradient update equations for $z_l$ and $a_l$.
The last layer update is 
\begin{align}
    a_{L-2}^{k + \nicefrac{1}{2}} =&~ a_{L-2}^k - \tau \nabla_{a_{L-2}}\L_y(\phi(\theta_{L-1}, a_{L-2})),
    \label{eq:update_a_explicit}
\end{align}
and for all other layers, 
 \begin{align}
 \label{eq:update_z}
  z_l^{k+\nicefrac{1}{2}} =&~ z_l^k -  \sigma'(z_l^k)(\sigma(z_l^k) - a_l^{k + \nicefrac{1}{2}}),\\ 
  \label{eq:update_a}
  a_{l-1}^{k + \nicefrac{1}{2}} =&~ a_{l-1}^k -  \nabla \left( \frac{1}{2} \|\phi(\theta_l, \cdot) - z_l^{k+\nicefrac{1}{2}} \|^2\right)(a_{l-1}^k),
\end{align}
where we use $ a_{l}^{k+\nicefrac{1}{2}}$ and $ z_{l}^{k+\nicefrac{1}{2}}$ to denote the updated variables before the forward pass of the next iteration and multiplication in \eqref{eq:update_z} is componentwise.
However, instead of taking explicit gradient steps to update the linear transfer parameters $\theta_l$, we take proximal steps
\begin{align}
    \theta_{l}^{k+1} = \argmin_{\theta} ~ \frac{1}{2} \norm{\phi(\theta, a_{l-1}^k) - z_l^{k+\nicefrac{1}{2}}}^2 + \frac{1}{2 \tau_\theta} \norm{\theta - \theta_{l}^{k}}^2. 
    \label{eq:wba_update}
\end{align}
This update can be computed in closed form as it amounts to a linear solve (for details see Appendix~\ref{sec:prox_linear}).
While in principle one can take a proximal step on the final loss $\L_y$, for efficiency reasons we choose an explicit gradient step, as the proximal step 
does not have a closed form solution in many scenarios (e.g. the softmax cross-entropy loss in classification problems).
Specifically,
\begin{align}
    \theta_{L-1}^{k+1} =&~ \theta_{L-1}^k - \tau \nabla_{\theta_{L-1}}\L_y(\phi(\theta_{L-1}^k, a_{L-2}^k)). \label{eq:update_theta_explicit}
\end{align}
Note that we have eliminated the step sizes in the updates for $z_l$ and $a_{l-1}$ in \eqref{eq:update_z} and \eqref{eq:update_a}, as such updates correspond to the choice of $\rho = \gamma = \frac{1}{\tau}$ in the penalty function \eqref{eq:penaltyFunction} and are natural in the sense of Proposition \ref{prop:equivalence}. 
For the proximal steps in the parameters $\theta$ in \eqref{eq:wba_update} we have introduced a step size $\tau_\theta$ which -- as we will see in Proposition \ref{prop:characterizingProxProp} below -- changes the descent metric opposed to $\tau$ which rather rescales the magnitude of the update. 

We refer to one sweep of updates according to equations \eqref{eq:forwardPass}~-~\eqref{eq:update_theta_explicit} as \textit{ProxProp}, as it closely resembles the classical backpropagation (BackProp), but replaces the parameter update by a proximal mapping instead of an explicit gradient descent step. In the following subsection we analyze the convergence properties of ProxProp more closely.

\subsubsection{Convergence of ProxProp}
ProxProp inherits all convergence-relevant properties from the classical backpropagation algorithm, despite replacing explicit gradient steps with proximal steps: It minimizes the original network energy $J(\vec\theta; X, y)$ as its fixed-points are stationary points of $J(\vec\theta; X, y)$, and the update direction $\vec{\theta}^{k+1}-\vec{\theta}^k$ is a descent direction such that it converges when combined with a suitable optimizer.
In particular, it is straight forward to combine ProxProp with popular optimizers such as Nesterov's accelerated gradient descent~\cite{Nesterov1983} or Adam~\cite{Kingma2015a}.

In the following, we give a detailed analysis of these properties.
\begin{proposition}
\label{prop:characterizingProxProp}
For $l=1,\hdots, L-2$, the update direction $\vec{\theta}^{k+1}-\vec{\theta}^k$ computed by ProxProp meets
\begin{equation}
 \theta_{l}^{k+1}-\theta_l^k = -\tau\left(\frac{1}{\tau_\theta}I + (\nabla \phi(\cdot, a_{l-1}^k)) (\nabla \phi(\cdot, a_{l-1}^k))^*\right)^{-1}\nabla_{\theta_l} J(\vec{\theta}^k; X,y).
\end{equation}
\end{proposition}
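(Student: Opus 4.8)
The plan is to exploit that $\phi$ is linear in its parameter argument, so that the subproblem \eqref{eq:wba_update} is a strictly convex quadratic in $\theta$ with a unique minimizer determined by a single linear equation. Write $\Phi$ for the constant linear map $\theta \mapsto \phi(\theta, a_{l-1}^k)$; the paper's convention that $\nabla$ denotes the transpose Jacobian gives $\nabla \phi(\cdot, a_{l-1}^k) = \Phi^*$, and therefore $\nabla\!\left(\tfrac12 \norm{\phi(\theta, a_{l-1}^k) - z}^2\right)(\theta) = \Phi^*(\Phi\theta - z)$. In particular, the matrix appearing in the statement is $(\nabla\phi(\cdot,a_{l-1}^k))(\nabla\phi(\cdot,a_{l-1}^k))^* = \Phi^*\Phi$, which is symmetric positive semidefinite, so that $\frac{1}{\tau_\theta}I + \Phi^*\Phi \succeq \frac{1}{\tau_\theta}I \succ 0$ is invertible for every $\tau_\theta > 0$. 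This already guarantees that \eqref{eq:wba_update} has a well-defined unique solution.

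First I would write the first-order optimality condition for \eqref{eq:wba_update} at the minimizer $\theta_l^{k+1}$, namely $\Phi^*(\Phi\theta_l^{k+1} - z_l^{k+\nicefrac{1}{2}}) + \frac{1}{\tau_\theta}(\theta_l^{k+1} - \theta_l^k) = 0$. Setting $\delta := \theta_l^{k+1} - \theta_l^k$ and substituting the forward-pass identity $\Phi\theta_l^k = \phi(\theta_l^k, a_{l-1}^k) = z_l^k$ from \eqref{eq:forwardPass}, I would collect the $\delta$-terms to obtain
\begin{equation*}
\left(\frac{1}{\tau_\theta}I + \Phi^*\Phi\right)\delta = \Phi^*\!\left(z_l^{k+\nicefrac{1}{2}} - z_l^k\right), \qquad\text{hence}\qquad \delta = \left(\frac{1}{\tau_\theta}I + \Phi^*\Phi\right)^{-1}\Phi^*\!\left(z_l^{k+\nicefrac{1}{2}} - z_l^k\right).
\end{equation*}

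The crux -- and the step I expect to be the main obstacle -- is to identify the right-hand side $\Phi^*(z_l^{k+\nicefrac{1}{2}} - z_l^k)$ with $-\tau\,\nabla_{\theta_l} J(\vec{\theta}^k; X, y)$. The key observation is that ProxProp and the explicit scheme of Algorithm~\ref{alg:penalty_backprop} differ only in step \textcirclednice{c}; the forward pass and the activation updates \textcirclednice{a},~\textcirclednice{b} are identical, so the intermediate value $z_l^{k+\nicefrac{1}{2}}$ feeding into both the proximal step \eqref{eq:wba_update} and the corresponding explicit step coincides. An explicit gradient step on $E$ with respect to $\theta_l$ at $\rho = \nicefrac{1}{\tau}$ reads $\theta_l^k - \tau\,\tfrac{1}{\tau}\Phi^*(\Phi\theta_l^k - z_l^{k+\nicefrac{1}{2}}) = \theta_l^k + \Phi^*(z_l^{k+\nicefrac{1}{2}} - z_l^k)$, while Proposition~\ref{prop:equivalence} asserts this same explicit step equals $\theta_l^k - \tau\,\nabla_{\theta_l} J(\vec{\theta}^k; X, y)$. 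Comparing the two expressions yields $\Phi^*(z_l^{k+\nicefrac{1}{2}} - z_l^k) = -\tau\,\nabla_{\theta_l} J(\vec{\theta}^k; X, y)$ exactly. Substituting this into the formula for $\delta$ and recalling $\Phi^*\Phi = (\nabla\phi(\cdot,a_{l-1}^k))(\nabla\phi(\cdot,a_{l-1}^k))^*$ then gives the claimed identity. The only care needed is to confirm that both the proximal and the explicit update evaluate the residual at the forward activation $a_{l-1}^k$ (rather than the updated $a_{l-1}^{k+\nicefrac{1}{2}}$), which is precisely the value at which classical backpropagation forms $\nabla_{\theta_l}J$; this consistency is what makes the invocation of Proposition~\ref{prop:equivalence} legitimate.
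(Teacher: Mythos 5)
Your proof is correct and takes essentially the same route as the paper's: both write the first-order optimality condition of \eqref{eq:wba_update}, use linearity of $\phi$ in $\theta$ together with the forward-pass identity $\phi(\theta_l^k,a_{l-1}^k)=z_l^k$ to obtain $\bigl(\frac{1}{\tau_\theta}I + (\nabla\phi(\cdot,a_{l-1}^k))(\nabla\phi(\cdot,a_{l-1}^k))^*\bigr)(\theta_l^{k+1}-\theta_l^k) = \nabla\phi(\cdot,a_{l-1}^k)\,(z_l^{k+\nicefrac{1}{2}}-z_l^k)$, and then identify the right-hand side with $-\tau\nabla_{\theta_l}J(\vec{\theta}^k;X,y)$ through the Proposition~\ref{prop:equivalence} machinery. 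The only cosmetic difference is that the paper replicates the proof of Proposition~\ref{prop:equivalence} up to \eqref{eq:above_formulaSupp} to define $g_l^k = z_l^k - z_l^{k+\nicefrac{1}{2}}$ and invokes the chain rule directly, whereas you recover the same identity by comparing the explicit step of Algorithm~\ref{alg:penalty_backprop} with the statement of Proposition~\ref{prop:equivalence}; your care about the residual being evaluated at $a_{l-1}^k$ matches the paper's equations.
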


In other words, ProxProp multiplies the gradient $\nabla_{\theta_l} J$ with the inverse of the positive definite, symmetric matrix
\begin{align}
\label{eq:matrixM}
M_l^k := \frac{1}{\tau_\theta}I + (\nabla \phi(\cdot,a_{l-1}^k)) (\nabla \phi(\cdot,a_{l-1}^k))^*,
\end{align}
which depends on the activations $a_{l-1}^k$ of the forward pass. Proposition \ref{prop:characterizingProxProp} has some important implications:
\begin{proposition}
\label{prop:stationaryPointsProxProp}
    For any choice of $\tau>0$ and $\tau_\theta>0$, fixed points $\vec \theta^*$ of ProxProp are stationary points of the original energy $J(\vec \theta; X, y)$.
\end{proposition}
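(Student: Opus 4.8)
The plan is to use the explicit characterization of the update direction from Proposition~\ref{prop:characterizingProxProp} together with the invertibility of the matrix $M_l^k$ it produces. By definition, $\vec\theta^*$ being a fixed point of ProxProp means that one full sweep leaves it unchanged, i.e. $\vec\theta^{k+1}=\vec\theta^k=\vec\theta^*$, so every layerwise update direction $\theta_l^{k+1}-\theta_l^k$ vanishes. The strategy is then to convert each vanishing update direction into a vanishing partial gradient of $J$, treating the hidden layers $l=1,\dots,L-2$ and the last layer $l=L-1$ separately, and finally to assemble these into $\nabla J(\vec\theta^*;X,y)=0$.

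For the hidden layers $l=1,\dots,L-2$ I would invoke Proposition~\ref{prop:characterizingProxProp}, which gives $\theta_l^{k+1}-\theta_l^k = -\tau (M_l^k)^{-1}\nabla_{\theta_l}J(\vec\theta^k;X,y)$ with $M_l^k$ as in \eqref{eq:matrixM}. The key observation is that $M_l^k=\frac{1}{\tau_\theta}I+(\nabla\phi(\cdot,a_{l-1}^k))(\nabla\phi(\cdot,a_{l-1}^k))^*$ is symmetric positive definite for every $\tau_\theta>0$, being the sum of the positive definite matrix $\frac{1}{\tau_\theta}I$ and the positive semidefinite term $(\nabla\phi(\cdot,a_{l-1}^k))(\nabla\phi(\cdot,a_{l-1}^k))^*$; hence $(M_l^k)^{-1}$ exists and is itself invertible. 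Since $\tau>0$, the vanishing of $\theta_l^{k+1}-\theta_l^k$ forces $(M_l^k)^{-1}\nabla_{\theta_l}J(\vec\theta^*;X,y)=0$, and applying $M_l^k$ to both sides yields $\nabla_{\theta_l}J(\vec\theta^*;X,y)=0$.

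For the last layer the update is the explicit gradient step \eqref{eq:update_theta_explicit}, so the fixed-point condition reads $-\tau\nabla_{\theta_{L-1}}\L_y(\phi(\theta_{L-1}^*,a_{L-2}^*))=0$. Here I would note that, by the definition \eqref{eq:network_loss} of $J$, the activation $a_{L-2}$ is produced by the forward pass through $\theta_1,\dots,\theta_{L-2}$ and does not depend on $\theta_{L-1}$; therefore $\nabla_{\theta_{L-1}}\L_y(\phi(\theta_{L-1},a_{L-2}))$ coincides with the partial gradient $\nabla_{\theta_{L-1}}J(\vec\theta^*;X,y)$. As $\tau>0$, this gives $\nabla_{\theta_{L-1}}J(\vec\theta^*;X,y)=0$ as well.

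Collecting the two cases, every partial gradient $\nabla_{\theta_l}J(\vec\theta^*;X,y)$ for $l=1,\dots,L-1$ vanishes, so $\nabla_{\vec\theta}J(\vec\theta^*;X,y)=0$ and $\vec\theta^*$ is a stationary point of $J$. I do not expect a genuine obstacle here: once Proposition~\ref{prop:characterizingProxProp} is in hand the argument is essentially linear algebra, and the only point requiring care is verifying that $M_l^k$ is invertible so that a zero update cannot mask a nonzero gradient — which is guaranteed by the $\frac{1}{\tau_\theta}I$ term for any $\tau_\theta>0$, and is precisely why the statement holds for \emph{any} positive $\tau$ and $\tau_\theta$.
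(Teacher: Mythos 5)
Your proof is correct, and it rests on the same key ingredients as the paper's: the characterization from Proposition~\ref{prop:characterizingProxProp} (equivalently \eqref{eq:helper}), the positive definiteness of $M_l^k$ from \eqref{eq:matrixM}, and the observation --- also made in the paper's proof of Proposition~\ref{prop:equivalence} --- that $J$ depends on $\theta_{L-1}$ only through $\L_y \circ \phi$, so the explicit last-layer step \eqref{eq:update_theta_explicit} directly encodes $\nabla_{\theta_{L-1}} J$. Where you genuinely differ is in the reading of ``fixed point'': you treat it literally as a point left unchanged by one ProxProp sweep and argue purely algebraically, since $0 = -\tau (M_l^*)^{-1}\nabla_{\theta_l} J(\vec\theta^*;X,y)$ with $\tau>0$ and $M_l^*$ invertible forces each partial gradient to vanish. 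The paper instead proves the asymptotic variant: assuming $\vec\theta^k \to \hat{\vec\theta}$, it uses continuity of the forward pass to conclude $a_l^k \to \hat a_l$ and $z_l^k \to \hat z_l$, reads off from \eqref{eq:helper} that $\nabla_{\theta_l} J(\vec\theta^k;X,y) \to 0$ (because $\theta_l^{k+1}-\theta_l^k \to 0$ while $M_l^k$ stays bounded), and then invokes continuous differentiability of $J$ to pass to the limit. Your route needs no limits and no continuity of $\nabla J$, and it proves exactly the proposition as stated; the paper's route additionally certifies that the limit of any convergent ProxProp trajectory is stationary, which is the practically relevant reading --- though the two conclusions coincide here, since the ProxProp update map is continuous and a limit of iterates is then itself a fixed point. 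In both arguments the split into hidden layers $l=1,\dots,L-2$ versus the explicitly updated last layer, and the role of $\tau,\tau_\theta>0$ via the $\frac{1}{\tau_\theta}I$ term, are handled identically, so there is no gap in your proposal.
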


Moreover, we can conclude convergence in the following sense.
\begin{proposition}
\label{prop:descent}
The ProxProp direction $\vec{\theta}^{k+1}-\vec{\theta}^k$ is a descent direction. Moreover, under the (weak) assumption that the activations $a_l^k$ remain bounded, the angle $\alpha^k$ between $-\nabla_{\vec{\theta}} J(\vec{\theta}^k; X,y)$ and $\vec{\theta}^{k+1}-\vec{\theta}^k$ remains uniformly bounded away from $\pi/2$, i.e. 
\begin{equation} 
\cos(\alpha^k)>c\geq 0, \qquad \forall k \geq 0,
\end{equation}
 for some constant $c$. 
\end{proposition}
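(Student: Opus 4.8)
The plan is to lean directly on Proposition~\ref{prop:characterizingProxProp}, which expresses each block of the update as $\theta_l^{k+1}-\theta_l^k=-\tau (M_l^k)^{-1}\nabla_{\theta_l}J(\vec{\theta}^k;X,y)$ with $M_l^k$ the symmetric positive definite matrix of \eqref{eq:matrixM}; for the final layer the explicit step \eqref{eq:update_theta_explicit} corresponds to the trivial metric $M_{L-1}^k=I$. Writing $g_l:=\nabla_{\theta_l}J(\vec{\theta}^k;X,y)$ and stacking blocks, the full direction is $d=\vec{\theta}^{k+1}-\vec{\theta}^k$ with $d_l=-\tau(M_l^k)^{-1}g_l$, and I will argue on this blockwise representation throughout.

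For the descent property I would compute the inner product with the gradient blockwise, $\iprod{\nabla_{\vec{\theta}}J}{d}=-\tau\sum_l \iprod{g_l}{(M_l^k)^{-1}g_l}$. Since each $M_l^k=\tfrac{1}{\tau_\theta}I+(\nabla\phi(\cdot,a_{l-1}^k))(\nabla\phi(\cdot,a_{l-1}^k))^*$ is a strictly positive multiple of the identity plus a positive semidefinite term, it is positive definite, and hence so is its inverse. Therefore every summand is nonnegative, and strictly positive whenever $g_l\neq 0$, so that $\iprod{\nabla_{\vec{\theta}}J}{d}<0$ unless $\nabla_{\vec{\theta}}J=0$. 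This establishes that $d$ is a descent direction.

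For the angle bound the crux is obtaining uniform spectral bounds on the metrics $M_l^k$. The lower bound $\lambda_{\min}(M_l^k)\ge 1/\tau_\theta$ is automatic from positive semidefiniteness of the outer-product term and yields $\lambda_{\max}((M_l^k)^{-1})\le \tau_\theta=:M$. The essential step is the \emph{upper} bound $\lambda_{\max}(M_l^k)\le 1/\tau_\theta+\Lambda$: here I would exploit the explicit structure of $\nabla\phi(\cdot,a_{l-1}^k)$ — for the affine transfer $\phi(\theta,a)=Wa+b\mathbf{1}$ the operator $(\nabla\phi)(\nabla\phi)^*$ is assembled from $a_{l-1}^k$ — to bound $\lambda_{\max}((\nabla\phi)(\nabla\phi)^*)$ by a fixed expression in $\norm{a_{l-1}^k}$. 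The boundedness assumption on the activations then produces a constant $\Lambda$ independent of $k$, giving $\lambda_{\min}((M_l^k)^{-1})\ge m:=1/(1/\tau_\theta+\Lambda)>0$ uniformly in $k$ and $l$ (the final block $M_{L-1}^k=I$ trivially respects these bounds).

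With $m\le\lambda((M_l^k)^{-1})\le M$ for all $k$ and $l$, I would conclude by the standard estimate: from positive definiteness, $-\iprod{\nabla_{\vec{\theta}}J}{d}=\tau\sum_l\iprod{g_l}{(M_l^k)^{-1}g_l}\ge\tau m\norm{\nabla_{\vec{\theta}}J}^2$, while $\norm{d}^2=\tau^2\sum_l\norm{(M_l^k)^{-1}g_l}^2\le\tau^2 M^2\norm{\nabla_{\vec{\theta}}J}^2$. Combining the two gives $\cos(\alpha^k)=-\iprod{\nabla_{\vec{\theta}}J}{d}/(\norm{\nabla_{\vec{\theta}}J}\,\norm{d})\ge m/M=:c>0$, uniformly in $k$. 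The main obstacle is the upper spectral bound of the third step — converting boundedness of the activations into a uniform bound on $\lambda_{\max}((\nabla\phi)(\nabla\phi)^*)$ — since the remaining arguments are routine positive-definiteness and Cauchy--Schwarz-type manipulations.
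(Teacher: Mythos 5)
Your proposal is correct and follows essentially the same route as the paper's proof: both rest on Proposition~\ref{prop:characterizingProxProp}, positive definiteness of $M_l^k$ (with $M_{L-1}^k = I$) for the descent property, and the uniform spectral bounds $\lambda_{\max}((M_l^k)^{-1}) \leq \tau_\theta$ and $\lambda_{\min}((M_l^k)^{-1}) \geq (1/\tau_\theta + \tilde{c})^{-1}$ derived from bounded activations, yielding $\cos(\alpha^k) \geq \lambda_{\min}/\lambda_{\max}$. Your only deviation is cosmetic: you stack the blocks and sum the quadratic forms explicitly, whereas the paper states the final estimate per block, but the underlying argument is identical.
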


Proposition \ref{prop:descent} immediately implies convergence of a whole class of algorithms that depend only on a provided descent direction.
We refer the reader to \cite[Chapter 3.2]{Nocedal2006} for examples and more details. 

Furthermore, Proposition~\ref{prop:descent} states convergence for any minimization scheme in step \textcirclednice{c} of Algorithm~\ref{alg:proxprop} that induces a descent direction in parameter space and thus provides the theoretical basis for a wide range of neural network optimization algorithms.

Considering the advantages of proximal steps over gradient steps, it is tempting to also update the auxiliary variables $a$ and $z$ in an implicit fashion. 
This corresponds to a proximal step in \textcirclednice{b} of Algorithm~\ref{alg:proxprop}.
However, one cannot expect an analogue version of Proposition~\ref{prop:stationaryPointsProxProp} to hold anymore. 
For example, if the update of $a_{L-2}$ in \eqref{eq:update_a_explicit} is replaced by a proximal step, the propagated error does not correspond to the gradient of the loss function $\L_y$, but to the gradient of its Moreau envelope. 
Consequently, one would then minimize a different energy.
While in principle this could result in an optimization algorithm with, for example, favorable generalization properties, we focus on minimizing the original network energy in this work and therefore do not further pursue the idea of implicit steps on $a$ and $z$. 

\subsubsection{Inexact solution of proximal steps}
As we can see in Proposition \ref{prop:characterizingProxProp}, the ProxProp updates differ from vanilla gradient descent by the variable metric induced by the matrices $(M_l^k)^{-1}$ with $M_l^k$ defined in \eqref{eq:matrixM}. 
Computing the ProxProp update direction $v_l^k:= \frac{1}{\tau}(\theta_l^{k+1}- \theta_l^{k})$ therefore reduces to solving the linear equation
\begin{align}
\label{eq:linearSystemUpdateDirection}
M_l^k v_l^k = -\nabla_{\theta_l}J(\vec{\theta}^k;X,y),
\end{align}
which requires an efficient implementation.
We propose to use a conjugate gradient (CG) method, not only because it is one of the most efficient methods for iteratively solving linear systems in general, but also because it can be implemented \textit{matrix-free}: It merely requires the application of the linear operator $M_l^k$ which consists of the identity and an application of $(\nabla \phi(\cdot,a_{l-1}^k)) (\nabla \phi(\cdot,a_{l-1}^k))^*$. The latter, however, is preimplemented for many linear transfer functions $\phi$ in common deep learning frameworks, because $\phi(x,a_{l-1}^k) = (\nabla \phi(\cdot,a_{l-1}^k))^*(x)$ is nothing but a forward-pass in $\phi$, and $\phi^*(z,a_{l-1}^k) = (\nabla \phi(\cdot,a_{l-1}^k))(z)$ provides the gradient with respect to the parameters $\theta$ if $z$ is the backpropagated gradient up to that layer. Therefore, a CG solver is straight-forward to implement in any deep learning framework using the existing, highly efficient and high level implementations of $\phi$ and $\phi^*$. 
For a fully connected network $\phi$ is a matrix multiplication and for a convolutional network the convolution operation.

As we will analyze in more detail in Section \ref{sec:approximateSolutions}, we approximate the solution to \eqref{eq:linearSystemUpdateDirection} with a few CG iterations, as the advantage of highly precise solutions does not justify the additional computational effort in obtaining them. 
Using any number of iterations provably does not harm the convergence properties of ProxProp:
\begin{proposition}
\label{prop:descentCgApproximation}
The direction $\tilde{v}_l^k$ one obtains from approximating the solution $v_l^k$ of \eqref{eq:linearSystemUpdateDirection} with the CG method remains a descent direction for any number of iterations.
\end{proposition}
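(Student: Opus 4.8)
The plan is to combine the positive definiteness of the metric $M_l^k$ with the variational (Galerkin) characterization of the conjugate gradient iterates. From \eqref{eq:matrixM} the operator $M_l^k = \frac{1}{\tau_\theta}I + (\nabla\phi(\cdot,a_{l-1}^k))(\nabla\phi(\cdot,a_{l-1}^k))^*$ is symmetric and positive definite for every $\tau_\theta>0$, being the sum of a positive multiple of the identity and a positive semidefinite operator of the form $AA^*$. Setting $b := -\nabla_{\theta_l}J(\vec\theta^k;X,y)$, the linear system \eqref{eq:linearSystemUpdateDirection} reads $M_l^k v_l^k = b$, and a block direction $d$ is a descent direction for $J$ in the $\theta_l$-coordinates exactly when $\iprod{\nabla_{\theta_l}J}{d}<0$, i.e. when $\iprod{b}{d}>0$. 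It therefore suffices to show that $\iprod{b}{\tilde v_l^k}>0$ for the approximate CG solution after any number $m\geq 1$ of iterations.

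First I would recall that CG initialized at $0$ produces the iterate $\tilde v_l^k$ that minimizes the convex quadratic $q(v)=\tfrac12\iprod{v}{M_l^k v}-\iprod{b}{v}$ over the Krylov subspace $\mathcal K_m=\mathrm{span}\{b,M_l^k b,\dots,(M_l^k)^{m-1}b\}$. The optimality condition for this subspace minimization is the Galerkin orthogonality of the residual $r_m=b-M_l^k\tilde v_l^k$ to $\mathcal K_m$. Since $\tilde v_l^k\in\mathcal K_m$, this yields $\iprod{r_m}{\tilde v_l^k}=0$ and hence the key identity
\[
\iprod{b}{\tilde v_l^k}=\iprod{\tilde v_l^k}{M_l^k\tilde v_l^k}\geq 0,
\]
which reduces the claim to verifying that $\tilde v_l^k\neq 0$ whenever $b\neq 0$.

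The main obstacle, and the only place where some care is needed, is upgrading this to a strict inequality, since the Galerkin identity alone does not preclude $\tilde v_l^k=0$. Here I would argue via the minimization property rather than the identity: for $b\neq 0$ one has $\mathcal K_1=\mathrm{span}\{b\}\subseteq\mathcal K_m$, and minimizing $q$ along $\mathrm{span}\{b\}$ is attained at the nonzero point $\tfrac{\iprod{b}{b}}{\iprod{b}{M_l^k b}}\,b$ with strictly negative objective value. As $\tilde v_l^k$ minimizes $q$ over the larger subspace $\mathcal K_m$, it follows that $q(\tilde v_l^k)<0=q(0)$, whence $\tilde v_l^k\neq 0$. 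Positive definiteness of $M_l^k$ then turns the identity above into $\iprod{b}{\tilde v_l^k}=\iprod{\tilde v_l^k}{M_l^k\tilde v_l^k}>0$, so $\tilde v_l^k$ is a strict descent direction in the $\theta_l$-block.

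Finally I would assemble the blocks. The last layer update \eqref{eq:update_theta_explicit} is the exact negative gradient, hence trivially a descent direction in the $\theta_{L-1}$-block; for inner blocks with $\nabla_{\theta_l}J=0$ the CG iterate is zero and contributes nothing, while every inner block with $b\neq 0$ satisfies $\iprod{\nabla_{\theta_l}J}{\tilde v_l^k}<0$ by the above. Since the full inner product splits over the parameter blocks, $\iprod{\nabla_{\vec\theta}J}{\tilde v^k}=\sum_l\iprod{\nabla_{\theta_l}J}{\tilde v_l^k}<0$ as long as $\nabla_{\vec\theta}J\neq 0$, so the concatenated ProxProp-CG direction is a descent direction for $J$ for any number of CG iterations, as claimed.
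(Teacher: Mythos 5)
Your proposal is correct, and it rests on the same foundational fact as the paper's proof: the CG iterate with zero initialization is the minimizer of the quadratic $q(v)=\tfrac{1}{2}\iprod{v}{M_l^k v}-\iprod{b}{v}$ over the Krylov subspace, with $M_l^k$ from \eqref{eq:matrixM} symmetric positive definite. Where you genuinely diverge is in how strict positivity of $\iprod{b}{\tilde v_l^k}$ is extracted. The paper writes the subspace minimizer in closed form, $x_k = \mathcal{K}_k((\mathcal{K}_k)^T A\mathcal{K}_k)^{-1}(\mathcal{K}_k)^T b$, shows the reduced matrix is positive definite via the factorization $A=\sqrt{A}^T\sqrt{A}$, and obtains strictness because $(\mathcal{K}_k)^T b \neq 0$ (its first entry is $\norm{b}^2$); crucially, this formula requires the Krylov vectors to be linearly independent, so the paper must assume CG has not yet converged and defer the converged case to Proposition~4. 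You avoid the explicit formula entirely: Galerkin orthogonality of the residual to the subspace gives the identity $\iprod{b}{\tilde v_l^k}=\iprod{\tilde v_l^k}{M_l^k \tilde v_l^k}\geq 0$, and you force $\tilde v_l^k\neq 0$ by comparing against the one-dimensional minimization along $\mathrm{span}\{b\}$, where $q\bigl(\tfrac{\iprod{b}{b}}{\iprod{b}{M_l^k b}}b\bigr)<0=q(0)$. This buys a shorter argument with no case split, since it is insensitive to early termination and to degeneracy of the Krylov basis, and it treats $b=0$ blocks gracefully; you also spell out the assembly over parameter blocks (the explicit last-layer step \eqref{eq:update_theta_explicit} and zero-gradient blocks), which the paper leaves implicit. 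What the paper's computation buys in exchange is slightly more information: it exhibits $\tilde v_l^k$ explicitly as a positive-definite transformation of $b$, rather than only the descent inequality.
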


\subsubsection{Convergence in the stochastic setting}
While the above analysis considers only the full batch setting, we remark that convergence of ProxProp can also be guaranteed in the stochastic setting under mild assumptions. Assuming that the activations $a_l^k$ remain bounded (as in Proposition~\ref{prop:descent}), the eigenvalues of $(M_l^k)^{-1}$ are uniformly contained in the interval $[\lambda, \tau_\theta]$ for some fixed $\lambda > 0$. 
Therefore, our ProxProp updates fulfill Assumption~4.3 in \cite{Bottou2016}, presuming the classic stochastic gradient fulfills them.
This guarantees convergence of stochastic ProxProp updates in the sense of \cite[Theorem~4.9]{Bottou2016}, i.e. for a suitable sequence of diminishing step sizes.

\section{Numerical evaluation}
\label{sec:experiments}
ProxProp generally fits well with the API provided by modern deep learning frameworks, since it can be implemented as a network layer with a custom backward pass for the proximal mapping.
We chose PyTorch for our implementation\footnote{\url{https://github.com/tfrerix/proxprop}}. 
In particular, our implementation can use the API's GPU compute capabilities; all numerical experiments reported below were conducted on an NVIDIA Titan X GPU. 
To directly compare the algorithms, we used our own layer for either proximal or gradient update steps (cf. step \textcirclednice{c} in Algorithms~\ref{alg:penalty_backprop}~and \ref{alg:proxprop}).
A ProxProp layer can be seamlessly integrated in a larger network architecture, also with other parametrized layers such as BatchNormalization.

\begin{figure*}[t!]
    \centering
  \begin{tabular}{ccc}
    \resizebox{.31\linewidth}{!}{
\begin{tikzpicture}

\definecolor{color0}{rgb}{0.12156862745098,0.466666666666667,0.705882352941177}
\definecolor{color1}{rgb}{1,0.498039215686275,0.0549019607843137}
\definecolor{color2}{rgb}{0.172549019607843,0.627450980392157,0.172549019607843}
\definecolor{color3}{rgb}{0.83921568627451,0.152941176470588,0.156862745098039}
\definecolor{color4}{rgb}{0.580392156862745,0.403921568627451,0.741176470588235}

\begin{axis}[
title={CIFAR-10, 3072-4000-1000-4000-10 MLP},
xlabel={Epochs},
ylabel={Full Batch Training Loss},
xmin=-2.45, xmax=51.45,
ymin=-0.114995932622874, ymax=2.4182358531115,
tick align=outside,
tick pos=left,
x grid style={lightgray!92.026143790849673!black},
y grid style={lightgray!92.026143790849673!black},
legend entries={{BackProp},{ProxProp (cg3)},{ProxProp (cg5)},{ProxProp (cg10)},{ProxProp (exact)}},
legend style={draw=none},
xticklabel style={font=\footnotesize},
legend cell align={left},
legend style={line width=2.0, fill=gray!7},
yticklabel style={font=\footnotesize}
]
\addlegendimage{no markers, color0}
\addlegendimage{no markers, color1}
\addlegendimage{no markers, color2}
\addlegendimage{no markers, color3}
\addlegendimage{no markers, color4}
\addplot [ultra thick, color0]
table {%
0 2.30293766127692
1 1.87150613201989
2 1.66074070400662
3 1.5573687142796
4 1.47103758653005
5 1.45457435713874
6 1.38314944903056
7 1.80114405817456
8 1.30979862213135
9 1.28325177033742
10 1.20849868721432
11 1.18099272118674
12 1.13519287109375
13 1.07568638390965
14 1.09633612434069
15 0.971193301677704
16 0.957933515972561
17 0.885656751526727
18 0.85729043285052
19 0.755138741599189
20 0.690172145101759
21 0.652272172768911
22 0.583903333875868
23 0.540888408488697
24 0.4636932230658
25 0.535121353467305
26 0.364401699105899
27 0.285737033685048
28 0.268733946482341
29 0.247612319389979
30 0.195278767910269
31 0.185930912693342
32 0.200534804165363
33 0.165840847873025
34 0.146960982431968
35 0.124408366779486
36 0.106361057195399
37 0.120535651014911
38 0.125460391740004
39 0.139144845803579
40 0.0979311404542791
41 0.0969393912702799
42 0.104551484146052
43 0.0889807978438007
44 0.0826893469111787
45 0.0874210705773698
46 0.0882575432459513
47 0.0933801763587528
48 0.0842611155162255
49 0.100602049794462
};
\addplot [ultra thick, color1]
table {%
0 2.30308895375994
1 1.65446132818858
2 1.48359440697564
3 1.38602616786957
4 1.2881861699952
5 1.19928237862057
6 1.12200529045529
7 1.05227045615514
8 0.968558493587706
9 0.898367754618327
10 0.821107988225089
11 0.756362900469038
12 0.662139523691601
13 0.604331546359592
14 0.530213640464677
15 0.462932740648588
16 0.410812931259473
17 0.348615672190984
18 0.305788951781061
19 0.252604829271634
20 0.214595197472307
21 0.177829329007202
22 0.139382360296117
23 0.117830156700479
24 0.0857456148498588
25 0.0645427049448093
26 0.0524868350062105
27 0.0351585152869423
28 0.0250834879568881
29 0.0184838843842347
30 0.0141538239498105
31 0.0115423769379656
32 0.00972150907748275
33 0.00872214230087896
34 0.00758464226188759
35 0.00691335886820323
36 0.00615135515626106
37 0.00567934691595534
38 0.00525119827749829
39 0.00492867043034898
40 0.00454187619292902
41 0.00429577072823627
42 0.00401718430738482
43 0.00379059158472551
44 0.00358528288133028
45 0.00342110658530146
46 0.00326048185945385
47 0.00309610874392092
48 0.00296860185658766
49 0.00285660897433344
};
\addplot [ultra thick, color2]
table {%
0 2.30288625823127
1 1.60462185144424
2 1.39983485274845
3 1.25313382413652
4 1.13212262524499
5 1.02723664310243
6 0.887100534306632
7 0.783470136589474
8 0.660186680157979
9 0.565432102150387
10 0.461201825075679
11 0.358489187558492
12 0.294064832230409
13 0.225273457831807
14 0.163590393380986
15 0.128789309908946
16 0.0961331327756246
17 0.0682628682090176
18 0.0458986883362134
19 0.0302620469282071
20 0.0207004744249086
21 0.0111343536267264
22 0.00665343485048248
23 0.00473132918640557
24 0.00397652867250145
25 0.00335493262391537
26 0.0029767535523408
27 0.00269440850242972
28 0.00245899599718137
29 0.00227421073036061
30 0.00209680038711263
31 0.00195149383119618
32 0.00184395111993783
33 0.0017095594201237
34 0.00161500778049231
35 0.00152510622024743
36 0.00144828336520327
37 0.00138168409725444
38 0.00131434510824167
39 0.00125293879666262
40 0.00119997196856679
41 0.00114867777964618
42 0.00110141454497352
43 0.0010612989969862
44 0.00101930061613934
45 0.000983147482232501
46 0.000948954510062726
47 0.000916861011905389
48 0.000886186821541438
49 0.000859376410436299
};
\addplot [ultra thick, color3]
table {%
0 2.30253341197968
1 1.53307735125224
2 1.29302298360401
3 1.0982457127836
4 0.893995380401611
5 0.705279644992616
6 0.538138543897205
7 0.368056487705972
8 0.26385682374239
9 0.176204272442394
10 0.131573138634364
11 0.106296145915985
12 0.0774215534329414
13 0.0750095270574093
14 0.0627510632077853
15 0.0587704913069804
16 0.0609494363268216
17 0.0602914764235417
18 0.0604016699310806
19 0.0452281988329358
20 0.0362110191956162
21 0.0336102886953288
22 0.0239559804710249
23 0.0198283394591676
24 0.00993198273030834
25 0.00629773588152602
26 0.002201221896232
27 0.00110247308039106
28 0.000581250137959917
29 0.000454772526023185
30 0.000394018534118206
31 0.000354861936325
32 0.000325021595926955
33 0.000301185777870059
34 0.000281084802797219
35 0.000264145850734268
36 0.000249530728938731
37 0.000236801168648526
38 0.00022541804921477
39 0.000215483665104128
40 0.000206196912525532
41 0.000197619163049644
42 0.000190186924656801
43 0.000183380931654635
44 0.000176865703704405
45 0.000170946693429465
46 0.000165447086490329
47 0.000160337384537949
48 0.000155515564175504
49 0.000150966728688218
};
\addplot [ultra thick, color4]
table {%
0 2.30259527895186
1 1.43528225421906
2 1.11779054138396
3 0.786926379468706
4 0.485072463750839
5 0.261642377575239
6 0.130827238659064
7 0.0811428907430834
8 0.0558163819420669
9 0.0474345992836687
10 0.0401589810020394
11 0.0311777283954951
12 0.0324679580931034
13 0.0248847088362608
14 0.0172563866712153
15 0.010575224366039
16 0.00600877708444993
17 0.00243414431396458
18 0.000889114638104931
19 0.000397883477853611
20 0.000304315038212937
21 0.000267249340605405
22 0.000246411302618475
23 0.000234762150244529
24 0.000227511617413256
25 0.000223986329284445
26 0.000223118294282661
27 0.000222882674521922
28 0.00022612783650402
29 0.000232400873533657
30 0.000238517317403522
31 0.000244372368453898
32 0.00025508913499329
33 0.000264494958294866
34 0.000276310496944158
35 0.000288486958339086
36 0.000301049677202375
37 0.000314103888296005
38 0.000329236473771743
39 0.000345682335965749
40 0.000365121841443599
41 0.000383882659500361
42 0.000396837754588988
43 0.000411206827266142
44 0.000433620845433325
45 0.000451331031379393
46 0.000472092838996711
47 0.000494845475703995
48 0.000521154022943746
49 0.000540319970524352
};
\end{axis}

\end{tikzpicture}} &
    \resizebox{.31\linewidth}{!}{
\begin{tikzpicture}

\definecolor{color0}{rgb}{0.12156862745098,0.466666666666667,0.705882352941177}
\definecolor{color1}{rgb}{1,0.498039215686275,0.0549019607843137}
\definecolor{color2}{rgb}{0.172549019607843,0.627450980392157,0.172549019607843}
\definecolor{color3}{rgb}{0.83921568627451,0.152941176470588,0.156862745098039}
\definecolor{color4}{rgb}{0.580392156862745,0.403921568627451,0.741176470588235}

\begin{axis}[
title={CIFAR-10, 3072-4000-1000-4000-10 MLP},
xlabel={Time [s]},
ylabel={Full Batch Training Loss},
xmin=-14.9780244231224, xmax=314.538512885571,
ymin=-0.111631768432886, ymax=2.41807565481674,
tick align=outside,
tick pos=left,
x grid style={lightgray!92.026143790849673!black},
y grid style={lightgray!92.026143790849673!black},
legend entries={{BackProp},{ProxProp (cg3)},{ProxProp (cg5)},{ProxProp (cg10)},{ProxProp (exact)}},
legend style={draw=none},
xticklabel style={font=\footnotesize},
legend cell align={left},
legend style={line width=2.0, fill=gray!7},
yticklabel style={font=\footnotesize}
]
\addlegendimage{no markers, color0}
\addlegendimage{no markers, color1}
\addlegendimage{no markers, color2}
\addlegendimage{no markers, color3}
\addlegendimage{no markers, color4}
\addplot [ultra thick, color0]
table {%
0 2.30303915871514
2.80160713195801 1.82629780107074
5.57698273658752 1.6651112847858
8.41791701316833 1.56001142528322
11.2094683647156 1.6086609866884
14.0283279418945 1.42706305848228
16.9474229812622 1.39448691871431
20.0303609371185 1.3879857579867
23.106618642807 1.29451384411918
26.1724627017975 1.30998065736559
29.2532017230988 1.22834987110562
32.3365776538849 1.17962076266607
35.4148833751678 1.11464673015806
38.492390871048 1.12149855030908
41.5599434375763 1.03148577941789
44.6166474819183 0.993953855832418
47.7407612800598 1.00781426628431
50.7990851402283 0.87905450463295
53.8683836460114 0.836888851722081
56.9212222099304 0.825145271089342
59.989874124527 0.727306927575005
63.0463826656342 0.655623251199722
66.0999336242676 0.639317576090495
69.1992347240448 0.528889227575726
72.2894778251648 0.515071266889572
75.379891872406 0.427906752294964
78.4501769542694 0.37654496398237
81.5250008106232 0.411199488242467
84.5846869945526 0.299321324129899
87.6427297592163 0.253434995975759
90.71937084198 0.22471166998148
93.7728490829468 0.237789079712497
96.8235569000244 0.253037561807368
99.894912481308 0.167204022407532
102.965431928635 0.192552308572663
106.028424978256 0.152031081004275
109.0846991539 0.165784892357058
112.152375936508 0.180362060666084
115.211457490921 0.130706705070204
118.270585775375 0.111194670117564
121.360610961914 0.123013379093673
124.455889463425 0.0797373488545418
127.526885271072 0.0850462079462078
130.582374811172 0.118537840826644
133.690487384796 0.09707718036241
136.7523458004 0.0861431931042009
139.805301904678 0.0990064354406463
142.888852596283 0.112416916754511
145.942222833633 0.0798498996429973
149.02388882637 0.090594990303119
152.093945741653 0.0945489556011226
155.174184560776 0.0984268164055215
158.243000984192 0.0913189896278911
161.29957652092 0.085051103722718
164.390043020248 0.0685384716838598
167.479785919189 0.0850127333982123
170.556713342667 0.0766947035574251
173.655479192734 0.0693235403547684
176.709494590759 0.0631841853467955
179.76146197319 0.102244291827083
182.826312541962 0.0816773948984014
185.886902570724 0.0579884931031201
188.953451395035 0.0652685563183493
192.011181116104 0.0964392283724414
195.098617076874 0.0773874489383565
198.151766061783 0.0618358069616887
201.245816230774 0.0778976136197646
204.318851470947 0.074086116730339
207.390568494797 0.0763435177505016
210.446120023727 0.055177615582943
213.523978233337 0.0780759923160076
216.599453449249 0.0680527044667138
219.652366399765 0.092055532336235
222.738651990891 0.065479816434284
225.806044340134 0.0607016299644278
228.871037483215 0.0810137928773959
231.923669338226 0.06374690487153
234.984225511551 0.0675695460496677
238.076774597168 0.0343805300278796
241.155529737473 0.0852606048600541
244.208283185959 0.0548901427537203
247.305730581284 0.0681092219634189
250.369426250458 0.0625491239337458
253.434516191483 0.0614092980614967
256.492159128189 0.0514140410762694
259.587759017944 0.0583537339129382
262.649583101273 0.0590332854953077
265.709500312805 0.0547221495666438
268.784507036209 0.0512664608243439
271.843970298767 0.0599721486162808
274.966011285782 0.0534990188562208
278.04783821106 0.0674813056571616
281.117710828781 0.0418920943410032
284.198012590408 0.047962622768763
287.271510839462 0.0578976255324152
290.340818405151 0.0607633682588736
293.415333271027 0.0525412434505092
296.46733379364 0.0538566199648711
299.560488462448 0.0487125364339186
};
\addplot [ultra thick, color1]
table {%
0 2.30308895375994
8.21554827690125 1.65446132818858
16.673907995224 1.48359440697564
25.1450486183167 1.38602616786957
33.6105337142944 1.2881861699952
42.0687952041626 1.19928237862057
50.5406014919281 1.12200529045529
58.9856204986572 1.05227045615514
67.4554371833801 0.968558493587706
75.9229669570923 0.898367754618327
84.3968212604523 0.821107988225089
92.8462111949921 0.756362900469038
101.311863899231 0.662139523691601
109.779231309891 0.604331546359592
118.242556810379 0.530213640464677
126.707168102264 0.462932740648588
135.181926965714 0.410812931259473
143.65597486496 0.348615672190984
152.118311166763 0.305788951781061
160.572857618332 0.252604829271634
169.055763483047 0.214595197472307
177.521215677261 0.177829329007202
185.988863706589 0.139382360296117
194.463213682175 0.117830156700479
202.927716732025 0.0857456148498588
211.390677928925 0.0645427049448093
219.856781244278 0.0524868350062105
228.339558124542 0.0351585152869423
236.80663394928 0.0250834879568881
245.269932746887 0.0184838843842347
253.754271507263 0.0141538239498105
262.211671590805 0.0115423769379656
270.662837743759 0.00972150907748275
279.139423847198 0.00872214230087896
287.62064242363 0.00758464226188759
296.082925319672 0.00691335886820323
};
\addplot [ultra thick, color2]
table {%
0 2.30288625823127
11.3015723228455 1.60462185144424
22.9527561664581 1.39983485274845
34.620386838913 1.25313382413652
46.2750082015991 1.13212262524499
57.9187955856323 1.02723664310243
69.5814790725708 0.887100534306632
81.2281539440155 0.783470136589474
92.8669204711914 0.660186680157979
104.501347780228 0.565432102150387
116.165598392487 0.461201825075679
127.815102100372 0.358489187558492
139.463114023209 0.294064832230409
151.128698348999 0.225273457831807
162.768900632858 0.163590393380986
174.427403926849 0.128789309908946
186.087549448013 0.0961331327756246
197.756353378296 0.0682628682090176
209.397246360779 0.0458986883362134
221.034046888351 0.0302620469282071
232.694694042206 0.0207004744249086
244.34715294838 0.0111343536267264
255.987144470215 0.00665343485048248
267.660050868988 0.00473132918640557
279.315522432327 0.00397652867250145
290.968179702759 0.00335493262391537
};
\addplot [ultra thick, color3]
table {%
0 2.30253341197968
19.4294092655182 1.53307735125224
38.9520668983459 1.29302298360401
58.4834444522858 1.0982457127836
77.9968528747559 0.893995380401611
97.5058753490448 0.705279644992616
117.025337219238 0.538138543897205
136.544312953949 0.368056487705972
156.069212198257 0.26385682374239
175.586132526398 0.176204272442394
195.120096445084 0.131573138634364
214.62884926796 0.106296145915985
234.139471292496 0.0774215534329414
253.662311077118 0.0750095270574093
273.173390865326 0.0627510632077853
292.694965362549 0.0587704913069804
};
\addplot [ultra thick, color4]
table {%
0 2.30259527895186
19.3000996112823 1.43528225421906
38.187331199646 1.11779054138396
57.5629308223724 0.786926379468706
77.273499250412 0.485072463750839
95.4217846393585 0.261642377575239
114.168922424316 0.130827238659064
132.888016223907 0.0811428907430834
151.750738382339 0.0558163819420669
171.376266956329 0.0474345992836687
192.038754940033 0.0401589810020394
212.240274429321 0.0311777283954951
232.771944046021 0.0324679580931034
252.76126408577 0.0248847088362608
273.905563354492 0.0172563866712153
295.930700778961 0.010575224366039
};
\end{axis}

\end{tikzpicture}} &
    \resizebox{.31\linewidth}{!}{
\begin{tikzpicture}

\definecolor{color0}{rgb}{0.12156862745098,0.466666666666667,0.705882352941177}
\definecolor{color1}{rgb}{1,0.498039215686275,0.0549019607843137}
\definecolor{color2}{rgb}{0.172549019607843,0.627450980392157,0.172549019607843}
\definecolor{color3}{rgb}{0.83921568627451,0.152941176470588,0.156862745098039}
\definecolor{color4}{rgb}{0.580392156862745,0.403921568627451,0.741176470588235}

\begin{axis}[
title={CIFAR-10, 3072-4000-1000-4000-10 MLP},
xlabel={Epochs},
ylabel={Validation Accuracy},
xmin=-2.45, xmax=51.45,
ymin=0.07457, ymax=0.58123,
tick align=outside,
tick pos=left,
x grid style={lightgray!92.026143790849673!black},
y grid style={lightgray!92.026143790849673!black},
legend entries={{BackProp},{ProxProp (cg3)},{ProxProp (cg5)},{ProxProp (cg10)},{ProxProp (exact)}},
xticklabel style={font=\footnotesize},
legend cell align={left},
legend style={line width=2.0, fill=gray!7},
yticklabel style={font=\footnotesize},
legend style={at={(0.97,0.03)}, anchor=south east, draw=none}
]
\addlegendimage{no markers, color0}
\addlegendimage{no markers, color1}
\addlegendimage{no markers, color2}
\addlegendimage{no markers, color3}
\addlegendimage{no markers, color4}
\addplot [ultra thick, color0]
table {%
0 0.0976
1 0.3188
2 0.4098
3 0.4362
4 0.4602
5 0.4646
6 0.4742
7 0.3404
8 0.5018
9 0.5126
10 0.5224
11 0.5184
12 0.5288
13 0.5358
14 0.5228
15 0.5472
16 0.5398
17 0.5456
18 0.5364
19 0.549
20 0.5514
21 0.534
22 0.5364
23 0.5322
24 0.5346
25 0.515
26 0.5442
27 0.5482
28 0.5326
29 0.5262
30 0.5362
31 0.5354
32 0.5246
33 0.545
34 0.5438
35 0.5406
36 0.5348
37 0.5348
38 0.5386
39 0.527
40 0.5472
41 0.5366
42 0.529
43 0.537
44 0.5462
45 0.538
46 0.535
47 0.5332
48 0.5362
49 0.5328
};
\addplot [ultra thick, color1]
table {%
0 0.105
1 0.4038
2 0.4512
3 0.4774
4 0.5028
5 0.5058
6 0.5238
7 0.5256
8 0.53
9 0.5392
10 0.535
11 0.5398
12 0.5506
13 0.5422
14 0.5428
15 0.54
16 0.5424
17 0.5344
18 0.5396
19 0.5386
20 0.5388
21 0.5352
22 0.543
23 0.543
24 0.5442
25 0.5426
26 0.5394
27 0.5464
28 0.5538
29 0.551
30 0.5518
31 0.5528
32 0.5548
33 0.551
34 0.5522
35 0.5534
36 0.552
37 0.555
38 0.5532
39 0.5518
40 0.5538
41 0.5528
42 0.554
43 0.5534
44 0.5546
45 0.5534
46 0.553
47 0.5542
48 0.5544
49 0.5528
};
\addplot [ultra thick, color2]
table {%
0 0.1018
1 0.4244
2 0.479
3 0.5006
4 0.532
5 0.526
6 0.5422
7 0.5376
8 0.54
9 0.543
10 0.5432
11 0.5412
12 0.5404
13 0.5392
14 0.5368
15 0.5434
16 0.5376
17 0.5412
18 0.5426
19 0.5416
20 0.5494
21 0.5492
22 0.551
23 0.557
24 0.5554
25 0.5572
26 0.5554
27 0.5546
28 0.5552
29 0.5558
30 0.5582
31 0.5556
32 0.5562
33 0.5564
34 0.5558
35 0.5562
36 0.5566
37 0.5572
38 0.5552
39 0.5568
40 0.5572
41 0.5558
42 0.5556
43 0.5574
44 0.5552
45 0.5556
46 0.556
47 0.5562
48 0.5558
49 0.5548
};
\addplot [ultra thick, color3]
table {%
0 0.1386
1 0.4408
2 0.4992
3 0.5212
4 0.5422
5 0.5424
6 0.539
7 0.539
8 0.5352
9 0.5306
10 0.5264
11 0.5382
12 0.5388
13 0.5376
14 0.5392
15 0.53
16 0.5424
17 0.5318
18 0.537
19 0.5314
20 0.54
21 0.5438
22 0.5366
23 0.5438
24 0.5464
25 0.5438
26 0.5502
27 0.5536
28 0.5528
29 0.554
30 0.554
31 0.5544
32 0.5538
33 0.5536
34 0.5544
35 0.5544
36 0.555
37 0.555
38 0.556
39 0.5562
40 0.5562
41 0.5564
42 0.556
43 0.5558
44 0.5566
45 0.5562
46 0.5568
47 0.5574
48 0.5582
49 0.5576
};
\addplot [ultra thick, color4]
table {%
0 0.0986
1 0.4682
2 0.4996
3 0.5268
4 0.5288
5 0.5294
6 0.5298
7 0.53
8 0.5426
9 0.5438
10 0.5342
11 0.5354
12 0.5324
13 0.5362
14 0.5464
15 0.5454
16 0.5486
17 0.5498
18 0.5486
19 0.5494
20 0.5498
21 0.5492
22 0.5486
23 0.5482
24 0.5488
25 0.5488
26 0.549
27 0.5486
28 0.5484
29 0.5472
30 0.548
31 0.5478
32 0.5472
33 0.5482
34 0.546
35 0.5472
36 0.546
37 0.5474
38 0.548
39 0.5486
40 0.545
41 0.5436
42 0.5404
43 0.5398
44 0.5394
45 0.5382
46 0.5372
47 0.5372
48 0.5368
49 0.535
};
\end{axis}

\end{tikzpicture}} 
  \end{tabular}
  \caption{Exact and inexact solvers for ProxProp compared with BackProp. \textbf{Left:} A more precise solution of the proximal subproblem leads to overall faster
    convergence, while even a very inexact solution (only 3 CG iterations) already outperforms classical backpropagation. \textbf{Center~\&~Right:} While the run time is comparable between the
    methods, the proposed ProxProp updates have better generalization performance ($\approx 54 \%$ for BackProp and $\approx 56 \%$ for ours on the test set). \vspace{-0.3cm}}
  \label{fig:exact_inexact_solver}
\end{figure*}
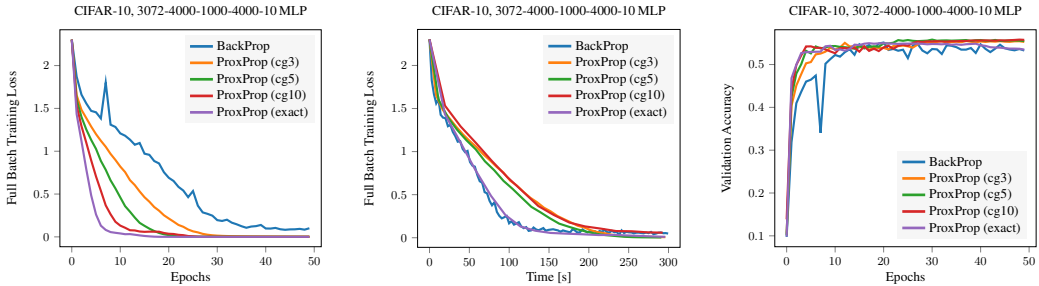

\subsection{Exact and approximate solutions to proximal steps}
\label{sec:approximateSolutions}
We study the behavior of ProxProp in comparison to classical BackProp for a supervised visual learning problem on the CIFAR-10 dataset.
We train a fully connected network with architecture $3072-4000-1000-4000-10$ and ReLU nonlinearities. As the loss function, we chose the cross-entropy between the probability distribution
obtained by a softmax nonlinearity and the ground-truth labels. We used a subset of 45000 images for training while keeping 5000 images as a validation set. 
We initialized the parameters $\theta_l$ uniformly in $[-1/\sqrt{n_{l-1}}, 1/\sqrt{n_{l-1}}]$, the default initialization of PyTorch. 

Figure~\ref{fig:exact_inexact_solver} shows the decay of the full batch training loss over epochs (left) and training time (middle) for a Nesterov momentum\footnote{PyTorch's Nesterov SGD for direction $\vec d(\vec \theta^k)$: $\vec m^{k+1} = \mu \vec m^k + \vec d(\vec\theta^k), \vec\theta^{k+1} = \vec\theta^k - \tau (\mu \vec m^{k+1} + \vec d(\vec \theta^k))$.} based optimizer using a momentum
of $\mu=0.95$ and minibatches of size 500. We used $\tau_\theta=0.05$ for the ProxProp variants along with $\tau = 1$. For BackProp we chose $\tau = 0.05$ as the optimal value we found in a grid search. 

As we can see in Figure~\ref{fig:exact_inexact_solver}, using implicit steps indeed improves the optimization progress per epoch. 
Thanks to powerful linear algebra methods on the GPU, 
the exact ProxProp solution is competitive with BackProp even in terms of runtime. 

The advantage of the CG-based approximations, however, is that they generalize to arbitrary linear transfer functions in a matrix-free manner, i.e. they are independent of whether the matrices $M_l^k$ can be formed efficiently. 
Moreover, the validation accuracies (right plot in Figure~\ref{fig:exact_inexact_solver}) suggest that these approximations have generalization advantages in comparison to BackProp as well as the exact ProxProp method. Finally, we found the exact solution to be significantly more sensitive to changes of $\tau_\theta$ than its CG-based approximations. We therefore focus on the CG-based variants of ProxProp in the following. In particular, we can eliminate the hyperparameter $\tau_\theta$ and consistently chose $\tau_\theta = 1$ for the rest of this paper, while one can in principle perform a hyperparameter search just as for the learning rate $\tau$. Consequently, there are no additional parameters compared with BackProp.

\subsection{Stability for larger step sizes}
We compare the behavior of ProxProp and BackProp for different step sizes. 
Table~\ref{tab:stepSizeStability} shows the final full batch training loss after 50 epochs with various $\tau$. 
The ProxProp based approaches remain stable over a significantly larger range of $\tau$. 
Even more importantly, deviating from the optimal step size $\tau$ by one order of magnitude resulted in a divergent algorithm for classical BackProp, but still provides reasonable training results for ProxProp with 3 or 5 CG iterations. 
These results are in accordance with our motivation developed in Section~\ref{sec:proximal_mapping}.
From a practical point of view, this eases hyperparameter search over $\tau$.
\begin{table}[h!]
\centering
\renewcommand{\arraystretch}{1.5}
\resizebox{1\linewidth}{!}{
\begin{tabular}{| r | c | c | c | c | c | c | c | c | c |}
  \hline			
$\tau $ & $50$& $10$ & $5$& $1$& $0.5$& $0.1$& $0.05$& $5 \cdot 10^{-3}$& $5 \cdot 10^{-4}$\\
  \hline
   BackProp &  -- & -- & -- & -- & -- & 0.524 & 0.091 & 0.637& 1.531 \\
  ProxProp (cg1) &  77.9 & 0.079 &0.145 & 0.667 &  0.991 & 1.481&1.593&1.881& 2.184\\
    ProxProp (cg3) &  94.7 & 0.644 & 0.031 & $2\cdot 10^{-3}$ & 0.012 & 1.029 & 1.334 &1.814 &2.175  \\
    ProxProp (cg5) &  66.5 & 0.190 &0.027& $3\cdot 10^{-4}$ &$2\cdot 10^{-3}$& 0.399 & 1.049&1.765&2.175 \\
  \hline  
\end{tabular}
}
\caption{Full batch loss for conjugate gradient versions of ProxProp and BackProp after training for 50 epochs, where ``--'' indicates that the algorithm diverged to {\tt NaN}. The implicit ProxProp algorithms remain stable for a significantly wider range of step sizes.}
\label{tab:stepSizeStability}
\end{table}

\begin{figure*}[t!]
    \centering
  \begin{tabular}{cc}
    \resizebox{.47\linewidth}{!}{
\begin{tikzpicture}

\definecolor{color0}{rgb}{0.12156862745098,0.466666666666667,0.705882352941177}
\definecolor{color1}{rgb}{1,0.498039215686275,0.0549019607843137}
\definecolor{color2}{rgb}{0.83921568627451,0.152941176470588,0.156862745098039}

\begin{axis}[
title={CIFAR-10, Convolutional Neural Network},
xlabel={Epochs},
ylabel={Full Batch Loss},
xmin=-2.5, xmax=52.5,
ymin=0.273175439006752, ymax=2.40034148815605,
tick align=outside,
tick pos=left,
x grid style={lightgray!92.026143790849673!black},
y grid style={lightgray!92.026143790849673!black},
legend style={draw=none},
xticklabel style={font=\footnotesize},
legend entries={{Adam + BackProp},{Adam + ProxProp (3 cg)},{Adam + ProxProp (10 cg)}},
legend cell align={left},
legend style={line width=2.0, fill=gray!7},
yticklabel style={font=\footnotesize}
]
\addlegendimage{no markers, color0}
\addlegendimage{no markers, color1}
\addlegendimage{no markers, color2}
\addplot [ultra thick, color0]
table {%
0 2.30268304612901
1 1.71700575616625
2 1.52898537450367
3 1.45309675269657
4 1.39067828920152
5 1.34888476265801
6 1.27649117310842
7 1.24053233597014
8 1.25098515484068
9 1.1831302775277
10 1.20097512801488
11 1.1102712419298
12 1.09531482987934
13 1.08121166494158
14 1.11955722835329
15 1.01022561391195
16 1.0033964269691
17 1.00267048610581
18 0.990202911032571
19 0.95029011103842
20 0.92513985104031
21 0.920417209466298
22 0.926363280084398
23 0.884734435213937
24 0.891663939423031
25 0.899865010049608
26 0.873566477828556
27 0.86544568406211
28 0.839843864573373
29 0.825395409928428
30 0.822769715388616
31 0.84615808195538
32 0.80327250957489
33 0.798036657439338
34 0.801340827676985
35 0.813971053229438
36 0.774303954177433
37 0.757996602190865
38 0.735220128297806
39 0.756572825378842
40 0.755134833521313
41 0.72778944240676
42 0.725984887944327
43 0.735624238517549
44 0.700663130150901
45 0.713355031940672
46 0.704635803567039
47 0.689662490288417
48 0.697021177079942
49 0.674409629238976
50 0.667458183897866
};
\addplot [ultra thick, color1]
table {%
0 2.30365212228563
1 1.62685237195757
2 1.39552887280782
3 1.28010165294011
4 1.2051891234186
5 1.14427435265647
6 1.08677878644731
7 1.02775059673521
8 0.988187989261415
9 0.954050985309813
10 0.918779606289334
11 0.892205938365724
12 0.876057324806849
13 0.856833533445994
14 0.82393257021904
15 0.812020053466161
16 0.798228218158086
17 0.777104021443261
18 0.761402206288444
19 0.736602064636019
20 0.751308396127489
21 0.721182902654012
22 0.705348022778829
23 0.705191187726127
24 0.682113165325589
25 0.672570139831967
26 0.656653838025199
27 0.648778664403492
28 0.632981346713172
29 0.646600388818317
30 0.626808559232288
31 0.623667672607634
32 0.607518140474955
33 0.603655132320192
34 0.584597018029955
35 0.569859901732869
36 0.570346070991622
37 0.562928908732202
38 0.55519837571515
39 0.555986300773091
40 0.537534583277173
41 0.532931253314018
42 0.52947848074966
43 0.531971024473508
44 0.517748387323485
45 0.531206229991383
46 0.509443208575249
47 0.491048315498564
48 0.492975373400582
49 0.487279436323378
50 0.475601843661732
};
\addplot [ultra thick, color2]
table {%
0 2.30205067263709
1 1.542287504673
2 1.33456925418642
3 1.17658570872413
4 1.07327224810918
5 0.998178616497252
6 0.955689428249995
7 0.909759471813838
8 0.876366760995653
9 0.84781798058086
10 0.815521191226112
11 0.792913167344199
12 0.764410424894757
13 0.746156814363268
14 0.728788136773639
15 0.709289996491538
16 0.698653562201394
17 0.674339546759923
18 0.656018988291423
19 0.641036338938607
20 0.629807771576775
21 0.622904578182432
22 0.608008729086982
23 0.593136300643285
24 0.582041286097633
25 0.569753216041459
26 0.564427616529995
27 0.546761583288511
28 0.537777326504389
29 0.526666883296437
30 0.513524591591623
31 0.504274248083432
32 0.503811918695768
33 0.494412767887115
34 0.486345456043879
35 0.476843804452154
36 0.46936547226376
37 0.456871118479305
38 0.45154884159565
39 0.442214912176132
40 0.435791557696131
41 0.433300707075331
42 0.42154984705978
43 0.412852109803094
44 0.40862577425109
45 0.400196430749363
46 0.393498062756326
47 0.390712652934922
48 0.387008541491297
49 0.379987221293979
50 0.369864804877175
};
\end{axis}

\end{tikzpicture}} &
    \resizebox{.47\linewidth}{!}{
\begin{tikzpicture}

\definecolor{color0}{rgb}{0.12156862745098,0.466666666666667,0.705882352941177}
\definecolor{color1}{rgb}{1,0.498039215686275,0.0549019607843137}
\definecolor{color2}{rgb}{0.83921568627451,0.152941176470588,0.156862745098039}

\begin{axis}[
title={CIFAR-10, Convolutional Neural Network},
xlabel={Time [s]},
ylabel={Full Batch Loss},
xmin=-16.4158574938774, xmax=344.733007371426,
ymin=0.19917587455776, ymax=2.40386527693934,
tick align=outside,
tick pos=left,
x grid style={lightgray!92.026143790849673!black},
y grid style={lightgray!92.026143790849673!black},
legend style={draw=none},
xticklabel style={font=\footnotesize},
legend entries={{Adam + BackProp},{Adam + ProxProp (3 cg)},{Adam + ProxProp (10 cg)}},
legend cell align={left},
legend style={line width=2.0, fill=gray!7},
yticklabel style={font=\footnotesize}
]
\addlegendimage{no markers, color0}
\addlegendimage{no markers, color1}
\addlegendimage{no markers, color2}
\addplot [ultra thick, color0]
table {%
0 2.30268304612901
1.89433717727661 1.71700575616625
3.7925271987915 1.52898537450367
5.69851016998291 1.45309675269657
7.59797096252441 1.39067828920152
9.49863386154175 1.34888476265801
11.4323124885559 1.27649117310842
13.3338956832886 1.24053233597014
15.2358980178833 1.25098515484068
17.1382982730865 1.1831302775277
19.0483644008636 1.20097512801488
20.9769518375397 1.1102712419298
22.8913424015045 1.09531482987934
24.8404622077942 1.08121166494158
26.7549877166748 1.11955722835329
28.6698336601257 1.01022561391195
30.5910477638245 1.0033964269691
32.5138049125671 1.00267048610581
34.4292738437653 0.990202911032571
36.3440518379211 0.95029011103842
38.2675034999847 0.92513985104031
40.1823680400848 0.920417209466298
42.0984845161438 0.926363280084398
44.0223863124847 0.884734435213937
45.9379999637604 0.891663939423031
47.8549873828888 0.899865010049608
49.7709636688232 0.873566477828556
51.6948854923248 0.86544568406211
53.6114001274109 0.839843864573373
55.5300552845001 0.825395409928428
57.5140659809113 0.822769715388616
59.5878524780273 0.84615808195538
61.6587901115417 0.80327250957489
63.7472865581512 0.798036657439338
65.8218805789948 0.801340827676985
67.8867704868317 0.813971053229438
69.951141834259 0.774303954177433
72.0228455066681 0.757996602190865
74.1076877117157 0.735220128297806
76.1998193264008 0.756572825378842
78.2721943855286 0.755134833521313
80.3313825130463 0.72778944240676
82.4086804389954 0.725984887944327
84.4894242286682 0.735624238517549
86.5910704135895 0.700663130150901
88.7206211090088 0.713355031940672
90.8157575130463 0.704635803567039
92.9178242683411 0.689662490288417
95.0185332298279 0.697021177079942
97.1175775527954 0.674409629238976
99.231285572052 0.667458183897866
101.344652891159 0.658158834775289
103.450909852982 0.668941985236274
105.557855606079 0.660591977172428
107.662654876709 0.645333438449436
109.764672517776 0.655388035376867
111.894049406052 0.63304288453526
114.014658927917 0.634126301606496
116.138011932373 0.623082996739282
118.260940790176 0.620687111881044
120.383989095688 0.612003280056847
122.513880729675 0.611243961254756
124.637498378754 0.622778049442503
126.760796785355 0.602859848075443
128.891323804855 0.643048228820165
131.014828443527 0.575264976421992
133.13685131073 0.604507329066594
135.266985654831 0.583317235443327
137.417718172073 0.567694454722934
139.540804862976 0.58047892053922
141.665460348129 0.565631747908062
143.795974969864 0.555154192778799
145.918535709381 0.569208778275384
148.041575431824 0.559792699085342
150.171927452087 0.554298571083281
152.301739692688 0.550140207674768
154.425650835037 0.538204216294818
156.549001932144 0.564542620380719
158.701976776123 0.540610313415527
160.825531959534 0.550582324796253
162.949127435684 0.512324852744738
165.079795837402 0.550180062982771
167.203973054886 0.505296803514163
169.326078653336 0.519736927085453
171.449214935303 0.527487520045704
173.579586744308 0.556173746784528
175.703007459641 0.500406001342667
177.84949016571 0.510235400994619
180.006151914597 0.51112224691444
182.128333091736 0.497288056545787
184.251139640808 0.517714247107506
186.380780696869 0.480833743678199
188.533667802811 0.469078368941943
190.657947063446 0.473848158121109
192.781032562256 0.48517145646943
194.91041135788 0.489236484633552
197.034756660461 0.484939045376248
199.1572265625 0.477382874488831
201.287840843201 0.476139844788445
203.411428689957 0.4578675803211
205.561154127121 0.505101075106197
207.684436559677 0.445372184448772
209.815286636353 0.454249826073647
211.938067674637 0.442695118321313
214.061583995819 0.506627263294326
216.191851377487 0.430960915154881
218.31489276886 0.434727798236741
220.439216136932 0.429686448309157
222.570378065109 0.425423249271181
224.701051950455 0.424881175491545
226.828269481659 0.439387326439222
228.973386764526 0.420144142044915
231.103122711182 0.447120052907202
233.22599363327 0.40554733077685
235.37663769722 0.408772643738323
237.506893873215 0.401013551818
239.629700899124 0.43590827981631
241.791138410568 0.412530181474156
243.921428918839 0.421404512723287
246.050953388214 0.382584193348885
248.178231716156 0.395649809969796
250.317961215973 0.390035751130846
252.44734454155 0.417087611556053
254.569424629211 0.40403348075019
256.69144320488 0.378275163637267
258.841067075729 0.405832433700562
260.971524238586 0.406274295846621
263.122860431671 0.380976855423715
265.246382951736 0.379966991146406
267.377053022385 0.372854351335102
269.500223636627 0.350525025195546
271.643395185471 0.377962962124083
273.77313709259 0.381606436438031
275.895341396332 0.367504655321439
278.018393039703 0.366842231485579
280.141193151474 0.34397147960133
282.271218299866 0.341554274492794
284.414425134659 0.353977127538787
286.537534713745 0.352339273691177
288.667993783951 0.346879892216788
290.805509328842 0.392900874548488
292.928861379623 0.332296871145566
295.058816194534 0.329121259848277
297.18901014328 0.331665544377433
299.332471370697 0.318636104464531
301.454491615295 0.352656212449074
303.610754728317 0.337445054120488
305.733472585678 0.334254056546423
307.856061220169 0.328530065218608
309.984244346619 0.312195847431819
312.113196372986 0.334264837702115
314.23493885994 0.321347378691038
316.35736322403 0.312910380297237
318.486872196198 0.323948126037916
320.60807299614 0.301983249187469
322.730516910553 0.299389029211468
324.871690273285 0.316486997074551
326.994924783707 0.304948969682058
};
\addplot [ultra thick, color1]
table {%
0 2.30365212228563
6.21519708633423 1.62685237195757
12.784411907196 1.39552887280782
19.3706915378571 1.28010165294011
25.9456467628479 1.2051891234186
32.5101354122162 1.14427435265647
39.0915117263794 1.08677878644731
45.6586799621582 1.02775059673521
52.226669549942 0.988187989261415
58.7906048297882 0.954050985309813
65.3852636814117 0.918779606289334
71.965993642807 0.892205938365724
78.5362803936005 0.876057324806849
85.1225161552429 0.856833533445994
91.6833148002625 0.82393257021904
98.2481074333191 0.812020053466161
104.842417240143 0.798228218158086
111.431854724884 0.777104021443261
117.994336366653 0.761402206288444
124.564521074295 0.736602064636019
131.15101647377 0.751308396127489
137.718773365021 0.721182902654012
144.284725904465 0.705348022778829
150.864385604858 0.705191187726127
157.432880163193 0.682113165325589
163.999146699905 0.672570139831967
170.567499160767 0.656653838025199
177.151084423065 0.648778664403492
183.711842536926 0.632981346713172
190.276203632355 0.646600388818317
196.860874891281 0.626808559232288
203.427155017853 0.623667672607634
209.992597579956 0.607518140474955
216.57861661911 0.603655132320192
223.163863897324 0.584597018029955
229.726635694504 0.569859901732869
236.298860311508 0.570346070991622
242.886479854584 0.562928908732202
249.454825878143 0.55519837571515
256.023321866989 0.555986300773091
262.615158081055 0.537534583277173
269.183506011963 0.532931253314018
275.759853124619 0.52947848074966
282.323591709137 0.531971024473508
288.897578716278 0.517748387323485
295.467103004456 0.531206229991383
302.031626224518 0.509443208575249
308.605200052261 0.491048315498564
315.181202888489 0.492975373400582
321.742097139359 0.487279436323378
328.317149877548 0.475601843661732
};
\addplot [ultra thick, color2]
table {%
0 2.30205067263709
14.0069394111633 1.542287504673
28.3923609256744 1.33456925418642
42.7985503673553 1.17658570872413
57.1809704303741 1.07327224810918
71.5618493556976 0.998178616497252
85.9542090892792 0.955689428249995
100.334239244461 0.909759471813838
114.716428279877 0.876366760995653
129.092986822128 0.84781798058086
143.490203380585 0.815521191226112
157.879465818405 0.792913167344199
172.253810167313 0.764410424894757
186.638559341431 0.746156814363268
201.024842262268 0.728788136773639
215.401744365692 0.709289996491538
229.795499563217 0.698653562201394
244.189144611359 0.674339546759923
258.56639289856 0.656018988291423
272.944904088974 0.641036338938607
287.343618392944 0.629807771576775
301.723384857178 0.622904578182432
316.114305973053 0.608008729086982
};
\end{axis}

\end{tikzpicture}} \\
    \resizebox{.47\linewidth}{!}{
\begin{tikzpicture}

\definecolor{color0}{rgb}{0.12156862745098,0.466666666666667,0.705882352941177}
\definecolor{color1}{rgb}{1,0.498039215686275,0.0549019607843137}
\definecolor{color2}{rgb}{0.83921568627451,0.152941176470588,0.156862745098039}

\begin{axis}[
title={CIFAR-10, Convolutional Neural Network},
xlabel={Epochs},
ylabel={Validation Accuracy},
xmin=-2.5, xmax=52.5,
ymin=0.05615, ymax=0.76125,
tick align=outside,
tick pos=left,
x grid style={lightgray!92.026143790849673!black},
y grid style={lightgray!92.026143790849673!black},
xticklabel style={font=\footnotesize},
legend entries={{Adam + BackProp},{Adam + ProxProp (3 cg)},{Adam + ProxProp (10 cg)}},
legend cell align={left},
legend style={line width=2.0, fill=gray!7},
yticklabel style={font=\footnotesize},
legend style={at={(0.97,0.03)}, anchor=south east, draw=none}
]
\addlegendimage{no markers, color0}
\addlegendimage{no markers, color1}
\addlegendimage{no markers, color2}
\addplot [ultra thick, color0]
table {%
0 0.0954
1 0.3802
2 0.4418
3 0.4838
4 0.4992
5 0.5248
6 0.5498
7 0.5602
8 0.5524
9 0.5798
10 0.563
11 0.6052
12 0.6216
13 0.6092
14 0.5938
15 0.651
16 0.6402
17 0.6386
18 0.6468
19 0.664
20 0.665
21 0.6716
22 0.6632
23 0.6782
24 0.6694
25 0.6696
26 0.6772
27 0.6752
28 0.6832
29 0.6922
30 0.6934
31 0.6786
32 0.694
33 0.6958
34 0.6972
35 0.6864
36 0.7068
37 0.7084
38 0.7094
39 0.6982
40 0.6958
41 0.708
42 0.7074
43 0.702
44 0.7098
45 0.7022
46 0.7116
47 0.711
48 0.7118
49 0.7072
50 0.7112
};
\addplot [ultra thick, color1]
table {%
0 0.0882
1 0.4092
2 0.488
3 0.5386
4 0.565
5 0.5904
6 0.6136
7 0.6348
8 0.6458
9 0.6534
10 0.668
11 0.6784
12 0.6754
13 0.6752
14 0.6924
15 0.6886
16 0.6904
17 0.6982
18 0.7014
19 0.707
20 0.6992
21 0.7042
22 0.7096
23 0.7082
24 0.7148
25 0.7136
26 0.7188
27 0.7158
28 0.7242
29 0.7164
30 0.7184
31 0.7212
32 0.724
33 0.7184
34 0.7274
35 0.7272
36 0.7252
37 0.7258
38 0.7282
39 0.7244
40 0.7292
41 0.7284
42 0.7236
43 0.7258
44 0.7238
45 0.7214
46 0.7254
47 0.729
48 0.7266
49 0.724
50 0.7236
};
\addplot [ultra thick, color2]
table {%
0 0.0988
1 0.4368
2 0.5192
3 0.5848
4 0.6242
5 0.6446
6 0.6632
7 0.6714
8 0.6818
9 0.6894
10 0.6998
11 0.7004
12 0.7064
13 0.7146
14 0.7128
15 0.7174
16 0.7188
17 0.7214
18 0.7254
19 0.7268
20 0.7274
21 0.7264
22 0.7266
23 0.7264
24 0.7274
25 0.7234
26 0.7236
27 0.727
28 0.7264
29 0.725
30 0.7244
31 0.728
32 0.7244
33 0.7258
34 0.7266
35 0.724
36 0.7208
37 0.7202
38 0.7242
39 0.7224
40 0.7176
41 0.7214
42 0.721
43 0.7148
44 0.7196
45 0.716
46 0.7174
47 0.7134
48 0.7144
49 0.7126
50 0.7126
};
\end{axis}

\end{tikzpicture}} &
    \resizebox{.47\linewidth}{!}{
\begin{tikzpicture}

\definecolor{color0}{rgb}{0.12156862745098,0.466666666666667,0.705882352941177}
\definecolor{color1}{rgb}{1,0.498039215686275,0.0549019607843137}
\definecolor{color2}{rgb}{0.83921568627451,0.152941176470588,0.156862745098039}

\begin{axis}[
title={CIFAR-10, Convolutional Neural Network},
xlabel={Time [s]},
ylabel={Validation Accuracy},
xmin=-16.4158574938774, xmax=344.733007371426,
ymin=0.05615, ymax=0.76125,
tick align=outside,
tick pos=left,
x grid style={lightgray!92.026143790849673!black},
y grid style={lightgray!92.026143790849673!black},
xticklabel style={font=\footnotesize},
legend entries={{Adam + BackProp},{Adam + ProxProp (3 cg)},{Adam + ProxProp (10 cg)}},
legend cell align={left},
legend style={line width=2.0, fill=gray!7},
yticklabel style={font=\footnotesize},
legend style={at={(0.97,0.03)}, anchor=south east, draw=none}
]
\addlegendimage{no markers, color0}
\addlegendimage{no markers, color1}
\addlegendimage{no markers, color2}
\addplot [ultra thick, color0]
table {%
0 0.0954
1.89433717727661 0.3802
3.7925271987915 0.4418
5.69851016998291 0.4838
7.59797096252441 0.4992
9.49863386154175 0.5248
11.4323124885559 0.5498
13.3338956832886 0.5602
15.2358980178833 0.5524
17.1382982730865 0.5798
19.0483644008636 0.563
20.9769518375397 0.6052
22.8913424015045 0.6216
24.8404622077942 0.6092
26.7549877166748 0.5938
28.6698336601257 0.651
30.5910477638245 0.6402
32.5138049125671 0.6386
34.4292738437653 0.6468
36.3440518379211 0.664
38.2675034999847 0.665
40.1823680400848 0.6716
42.0984845161438 0.6632
44.0223863124847 0.6782
45.9379999637604 0.6694
47.8549873828888 0.6696
49.7709636688232 0.6772
51.6948854923248 0.6752
53.6114001274109 0.6832
55.5300552845001 0.6922
57.5140659809113 0.6934
59.5878524780273 0.6786
61.6587901115417 0.694
63.7472865581512 0.6958
65.8218805789948 0.6972
67.8867704868317 0.6864
69.951141834259 0.7068
72.0228455066681 0.7084
74.1076877117157 0.7094
76.1998193264008 0.6982
78.2721943855286 0.6958
80.3313825130463 0.708
82.4086804389954 0.7074
84.4894242286682 0.702
86.5910704135895 0.7098
88.7206211090088 0.7022
90.8157575130463 0.7116
92.9178242683411 0.711
95.0185332298279 0.7118
97.1175775527954 0.7072
99.231285572052 0.7112
101.344652891159 0.7108
103.450909852982 0.7112
105.557855606079 0.7132
107.662654876709 0.7104
109.764672517776 0.708
111.894049406052 0.7174
114.014658927917 0.7088
116.138011932373 0.7178
118.260940790176 0.711
120.383989095688 0.7124
122.513880729675 0.7134
124.637498378754 0.7062
126.760796785355 0.7154
128.891323804855 0.7006
131.014828443527 0.716
133.13685131073 0.7104
135.266985654831 0.7112
137.417718172073 0.7156
139.540804862976 0.7082
141.665460348129 0.713
143.795974969864 0.7176
145.918535709381 0.7018
148.041575431824 0.7118
150.171927452087 0.7098
152.301739692688 0.714
154.425650835037 0.7122
156.549001932144 0.7116
158.701976776123 0.7068
160.825531959534 0.7048
162.949127435684 0.7144
165.079795837402 0.7064
167.203973054886 0.7106
169.326078653336 0.7132
171.449214935303 0.7082
173.579586744308 0.6914
175.703007459641 0.7098
177.84949016571 0.7092
180.006151914597 0.7034
182.128333091736 0.7084
184.251139640808 0.6996
186.380780696869 0.7092
188.533667802811 0.7122
190.657947063446 0.7084
192.781032562256 0.7048
194.91041135788 0.7118
197.034756660461 0.6998
199.1572265625 0.7026
201.287840843201 0.7064
203.411428689957 0.7064
205.561154127121 0.7002
207.684436559677 0.7102
209.815286636353 0.7052
211.938067674637 0.7096
214.061583995819 0.6968
216.191851377487 0.7116
218.31489276886 0.7042
220.439216136932 0.7056
222.570378065109 0.7072
224.701051950455 0.6992
226.828269481659 0.6976
228.973386764526 0.6982
231.103122711182 0.699
233.22599363327 0.7048
235.37663769722 0.7008
237.506893873215 0.7048
239.629700899124 0.7004
241.791138410568 0.6966
243.921428918839 0.6992
246.050953388214 0.7036
248.178231716156 0.6964
250.317961215973 0.7016
252.44734454155 0.6898
254.569424629211 0.695
256.69144320488 0.6996
258.841067075729 0.6964
260.971524238586 0.6914
263.122860431671 0.6986
265.246382951736 0.6942
267.377053022385 0.696
269.500223636627 0.6968
271.643395185471 0.6948
273.77313709259 0.6944
275.895341396332 0.6996
278.018393039703 0.6896
280.141193151474 0.6954
282.271218299866 0.6966
284.414425134659 0.6926
286.537534713745 0.6946
288.667993783951 0.6954
290.805509328842 0.6836
292.928861379623 0.6958
295.058816194534 0.6974
297.18901014328 0.7004
299.332471370697 0.693
301.454491615295 0.6842
303.610754728317 0.6954
305.733472585678 0.6934
307.856061220169 0.6888
309.984244346619 0.6952
312.113196372986 0.6928
314.23493885994 0.69
316.35736322403 0.6928
318.486872196198 0.6896
320.60807299614 0.6952
322.730516910553 0.6948
324.871690273285 0.6934
326.994924783707 0.691
};
\addplot [ultra thick, color1]
table {%
0 0.0882
6.21519708633423 0.4092
12.784411907196 0.488
19.3706915378571 0.5386
25.9456467628479 0.565
32.5101354122162 0.5904
39.0915117263794 0.6136
45.6586799621582 0.6348
52.226669549942 0.6458
58.7906048297882 0.6534
65.3852636814117 0.668
71.965993642807 0.6784
78.5362803936005 0.6754
85.1225161552429 0.6752
91.6833148002625 0.6924
98.2481074333191 0.6886
104.842417240143 0.6904
111.431854724884 0.6982
117.994336366653 0.7014
124.564521074295 0.707
131.15101647377 0.6992
137.718773365021 0.7042
144.284725904465 0.7096
150.864385604858 0.7082
157.432880163193 0.7148
163.999146699905 0.7136
170.567499160767 0.7188
177.151084423065 0.7158
183.711842536926 0.7242
190.276203632355 0.7164
196.860874891281 0.7184
203.427155017853 0.7212
209.992597579956 0.724
216.57861661911 0.7184
223.163863897324 0.7274
229.726635694504 0.7272
236.298860311508 0.7252
242.886479854584 0.7258
249.454825878143 0.7282
256.023321866989 0.7244
262.615158081055 0.7292
269.183506011963 0.7284
275.759853124619 0.7236
282.323591709137 0.7258
288.897578716278 0.7238
295.467103004456 0.7214
302.031626224518 0.7254
308.605200052261 0.729
315.181202888489 0.7266
321.742097139359 0.724
328.317149877548 0.7236
};
\addplot [ultra thick, color2]
table {%
0 0.0988
14.0069394111633 0.4368
28.3923609256744 0.5192
42.7985503673553 0.5848
57.1809704303741 0.6242
71.5618493556976 0.6446
85.9542090892792 0.6632
100.334239244461 0.6714
114.716428279877 0.6818
129.092986822128 0.6894
143.490203380585 0.6998
157.879465818405 0.7004
172.253810167313 0.7064
186.638559341431 0.7146
201.024842262268 0.7128
215.401744365692 0.7174
229.795499563217 0.7188
244.189144611359 0.7214
258.56639289856 0.7254
272.944904088974 0.7268
287.343618392944 0.7274
301.723384857178 0.7264
316.114305973053 0.7266
};
\end{axis}

\end{tikzpicture}} \\
  \end{tabular}
  \caption{ProxProp as a first-order oracle in combination with the Adam optimizer. The proposed method leads to faster decrease of the full batch loss in epochs and to an overall higher accuracy on the validation set. The plots on the right hand side show data for a fixed runtime, which corresponds to a varying number of epochs for the different optimizers.}
  \label{fig:adam}
\end{figure*}
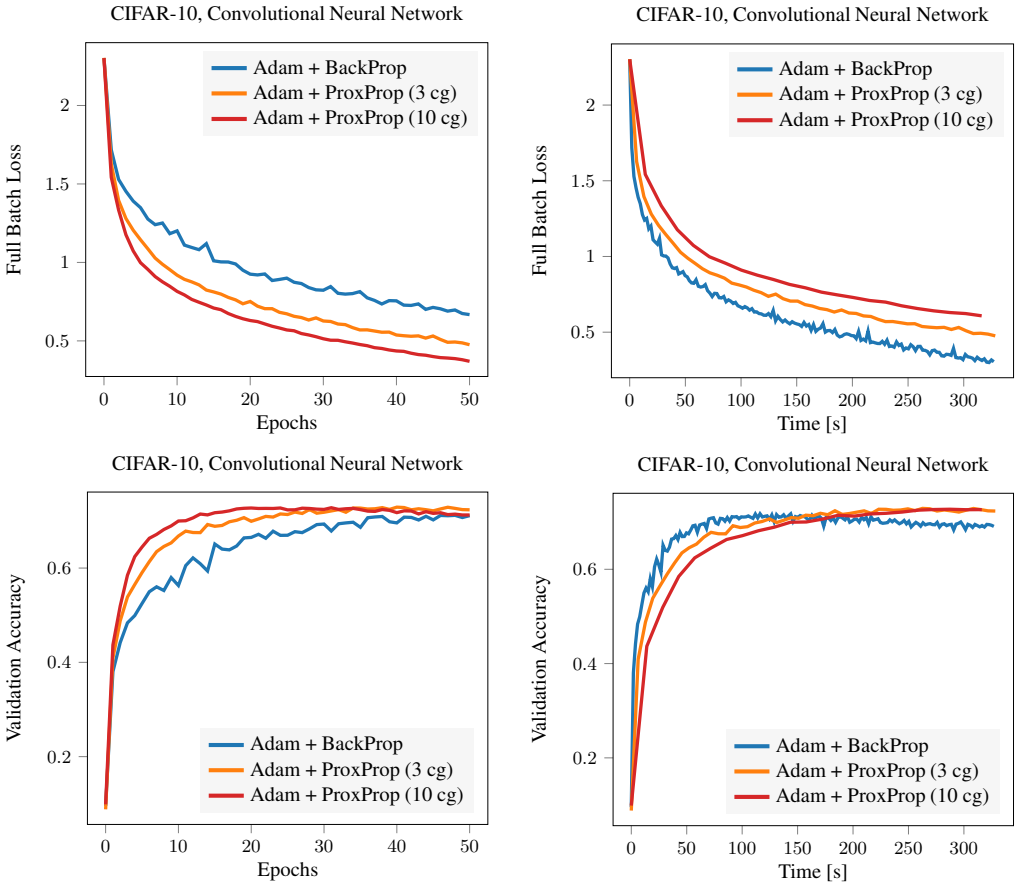

\subsection{ProxProp as a first-order oracle}
We show that ProxProp can be used as a gradient oracle for first-order optimization algorithms. In this section, we consider Adam \cite{Kingma2015a}.
Furthermore, to demonstrate our algorithm on a generic architecture with layers commonly used in practice, we trained on a convolutional neural network of the form:
{
\small
\begin{align*}
  &\rm{Conv} [16\times 32\times 32] \rightarrow \rm{ReLU} \rightarrow \rm{Pool} [16\times 16\times 16] 
    \rightarrow \rm{Conv} [20\times 16\times 16] \rightarrow \rm{ReLU} \\ 
  &\rightarrow \rm{Pool} [20\times 8\times 8] 
    \rightarrow \rm{Conv} [20\times 8\times 8] \rightarrow \rm{ReLU} \rightarrow \rm{Pool} [20\times 4\times 4] 
    \rightarrow \rm{FC + Softmax} [10\times 1\times 1]
\end{align*}
}
\vspace{-0.5cm}

Here, the first dimension denotes the respective number of filters with kernel size $5 \times 5$ and max pooling downsamples its input by a factor of two. 
We set the step size $\tau = 10^{-3}$ for both BackProp and ProxProp. 
 
The results are shown in Fig.~\ref{fig:adam}. Using parameter update directions induced by ProxProp within Adam leads to a significantly faster decrease of the full batch training loss in epochs. 
While the running time is higher than the highly optimized backpropagation method, we expect that it can be improved through further engineering efforts. 
We deduce from Fig.~\ref{fig:adam} that the best validation accuracy ($72.9\%$) of the proposed method is higher than the one obtained with classical backpropagation
($71.7\%$). Such a positive effect of proximal smoothing on the generalization capabilities of deep networks is consistent with the observations of~\citet{Chaudhari2017}.
Finally, the accuracies on the test set after 50 epochs are $70.7\%$ for ProxProp and $69.6\%$ for BackProp which suggests 
that the proposed algorithm can lead to better generalization.

\section{Conclusion}
We have proposed proximal backpropagation (ProxProp) as an effective method for training neural networks.  
To this end, we first showed the equivalence of the classical backpropagation algorithm with an algorithm that alternates between sequential gradient steps on a quadratic penalty function and forward passes through the network. 
Subsequently, we developed a generalization of BackProp, which replaces explicit gradient steps with implicit (proximal) steps, and proved that such a scheme yields a descent direction, even if the implicit steps are approximated by conjugate gradient iterations. 
Our numerical analysis demonstrates that ProxProp is stable across various choices of step sizes and shows promising results when compared with common stochastic gradient descent optimizers.

We believe that the interpretation of error backpropagation as the alternation between forward passes and sequential minimization steps on a penalty functional provides a theoretical basis for the development of further learning algorithms.

\bibliographystyle{iclr2018_conference}
\bibliography{nips_2017}

\newpage
\section*{Appendix}
\setcounter{section}{0}
\renewcommand\thesection{\Alph{section}}
\section{Theoretical results}
\label{sec:proofs}
\begin{proof}[Proof of Proposition~1]
We first take a gradient step on 
\begin{align}
\label{eq:penaltyFunctionSupp}
E(\vec{\theta}, \vec{a}, \vec{z}) &= \L_y(\phi(\theta_{L-1}, a_{L-2}))  
 + \sum_{l=1}^{L-2} \frac{\gamma}{2} \|\sigma(z_{l}) - a_l\|^2 
+ \frac{\rho}{2}\|\phi(\theta_l, a_{l-1}) - z_l \|^2,
\end{align} 
with respect to $(\theta_{L-1}, a_{L-2})$. The gradient step with respect to $\theta_{L-1}$ is the same as in the gradient descent update, 
\begin{equation}
\begin{aligned}
    \vec{\theta}^{k+1} = \vec{\theta}^{k} - \tau \nabla J(\vec{\theta}^{k}; X, y),
\label{eq:gradient_descentSupp}
\end{aligned}
\end{equation}
since $J$ depends on $\theta_{L-1}$ only via $\L_y \circ \phi$.

The gradient descent step on $a_{L-2}$ in \hyperref[alg1:gradient_a]{\textcirclednice{a}} yields 
\begin{align}
\label{eq:last_a_updateSupp}
 a_{L-2}^{k+\nicefrac{1}{2}} &= a_{L-2}^{k} - \tau \nabla_a\phi(\theta_{L-1}^{k}, a_{L-2}^{k}) 
 \cdot  \nabla_\phi \L_y(\phi(\theta_{L-1}^{k}, a_{L-2}^{k})),
\end{align}
where we use $ a_{L-2}^{k+\nicefrac{1}{2}}$ to denote the updated variable $a_{L-2}$ before the forward pass of the next iteration. To keep the presentation as clear as possible we slightly abused the notation of a right multiplication with $\nabla_a\phi(\theta_{L-1}^{k}, a_{L-2}^{k})$: While this notation is exact in the case of fully connected layers, it represents the application of the corresponding linear operator in the more general case, e.g. for convolutions.

For all layers $l\leq L-2$ note that due to the forward pass in Algorithm~1 we have 
\begin{equation}
	\sigma(z_l^{k}) = a_l^{k}, \quad \phi(\theta_l^{k},a_{l-1}^{k}) = z_l^{k}
\end{equation}
and we therefore get the following update equations in the gradient step \hyperref[alg1:gradient_b]{\textcirclednice{b}}
\begin{align}
\label{eq:z_updateSupp}
\begin{split}
z_l^{k+\nicefrac{1}{2}} = z_l^{k} - \tau\gamma \nabla\sigma (z_l^{k}) \left(\sigma(z_l^{k}) - a_l^{k+\nicefrac{1}{2}} \right) 
= z_l^{k} - \nabla\sigma (z_l^{k}) \left( a_l^{k} - a_l^{k+\nicefrac{1}{2}} \right), 
\end{split}
\end{align}
and in the gradient step \hyperref[alg1:gradient_c]{\textcirclednice{c}} w.r.t. $a_{l-1}$,
\begin{align}
\label{eq:a_update_proofSupp}
\begin{split}
  a_{l-1}^{k+\nicefrac{1}{2}} &= a_{l-1}^{k} - \tau\rho \nabla_{a}\phi(\theta_l^{k}, a_{l-1}^{k})
 \cdot \left( \phi(\theta_l^{k}, a_{l-1}^{k}) - z_l^{k+\nicefrac{1}{2}} \right) \\
&= a_{l-1}^{k} - \nabla_{a}\phi(\theta_l^{k}, a_{l-1}^{k}) 
 \cdot \left( z_l^{k} - z_l^{k+\nicefrac{1}{2}} \right).
\end{split}
\end{align}

Equations \eqref{eq:z_updateSupp} and \eqref{eq:a_update_proofSupp} can be combined to obtain:
\begin{align}
 z_l^{k}  - z_l^{k+\nicefrac{1}{2}} =& \nabla\sigma (z_l^{k})\nabla_{a}\phi(\theta_{l+1}^{k}, a_{l}^{k}) 
 \cdot \left( z_{l+1}^{k} - z_{l+1}^{k+\nicefrac{1}{2}} \right).
\end{align}
The above formula allows us to backtrack the differences of the old $z_l^{k}$ and the updated $z_l^{k+\nicefrac{1}{2}}$ up to layer $L-2$, where we can use equations \eqref{eq:z_updateSupp} and \eqref{eq:last_a_updateSupp} to relate the difference to the loss. 
Altogether, we obtain
\begin{equation}
\begin{aligned}
 z_l^{k}  - z_l^{k+\nicefrac{1}{2}} = 
   \tau \left(\prod_{q=l}^{L-2}\nabla\sigma (z_q^{k})\nabla_{a}\phi(\theta_{q+1}^{k}, a_{q}^{k})\right)
 \cdot  \nabla_\phi \L_y(\phi(\theta_{L-1}^{k}, a_{L-2}^{k})). 
\end{aligned}
\label{eq:above_formulaSupp}
\end{equation}
By inserting \eqref{eq:above_formulaSupp} into the gradient descent update equation with respect to $\theta_l$ in \hyperref[alg1:gradient_c]{\textcirclednice{c}} ,
\begin{align}
\label{eq:thetaUpdate}
\begin{split}
  \theta^{k+1} = \theta^{k} - \nabla_{\theta}\phi(\theta_l^{k}, a_{l-1}^{k}) 
 \cdot \left( z_l^{k} - z_l^{k+\nicefrac{1}{2}} \right) ,
\end{split}
\end{align}
we obtain the chain rule for update \eqref{eq:gradient_descentSupp}.
\end{proof}

\begin{proof}[Proof of Proposition~2]
Since only the updates for $\theta_l$, $l=1, \hdots,L-2$, are performed implicitly, one can replicate the proof of Proposition~1 exactly up to equation \eqref{eq:above_formulaSupp}. Let us denote the right hand side of \eqref{eq:above_formulaSupp} by $g_l^k$, i.e. $z_l^{k+\nicefrac{1}{2}}=z^k_l - g_l^k$ and note that
\begin{equation}
\tau \nabla_{\theta_l} J(\vec{\theta}^k; X,y) = \nabla_\theta \phi(\cdot, a_{l-1}^k) \cdot g_l^k 
\end{equation}
holds by the chain rule (as seen in \eqref{eq:thetaUpdate}). 
We have eliminated the dependence of $\nabla_\theta \phi(\theta_l^k, a_{l-1}^k)$ on $\theta_l^k$ and wrote $\nabla_\theta \phi(\cdot, a_{l-1}^k)$ instead, because we assume $\phi$ to be linear in $\theta$ such that $\nabla_\theta \phi$ does not depend on the point $\theta$ where the gradient is evaluated anymore. 

We now rewrite the ProxProp update equation of the parameters $\theta$ as follows
\begin{equation}
\begin{aligned}
  \theta_{l}^{k+1} =&~ \argmin_{\theta} ~ \frac{1}{2} \norm{\phi(\theta, a_{l-1}^k) - z_l^{k+\nicefrac{1}{2}}}^2 + \frac{1}{2 \tau_\theta} \norm{\theta - \theta_{l}^{k}}^2 \; \\
  =&~ \argmin_{\theta} ~ \frac{1}{2} \norm{\phi(\theta, a_{l-1}^k) - (z_l^{k} - g_l^k)}^2 + \frac{1}{2 \tau_\theta} \norm{\theta - \theta_{l}^{k}}^2 \; \\
    =&~ \argmin_{\theta} ~ \frac{1}{2} \norm{\phi(\theta, a_{l-1}^k) - (\phi(\theta^k, a_{l-1}^k) - g_l^k)}^2 + \frac{1}{2 \tau_\theta} \norm{\theta - \theta_{l}^{k}}^2 \; \\
        =&~ \argmin_{\theta} ~ \frac{1}{2} \norm{\phi(\theta-\theta^k, a_{l-1}^k) + g_l^k}^2 + \frac{1}{2 \tau_\theta} \norm{\theta - \theta_{l}^{k}}^2, \;
\end{aligned}
\end{equation}
where we have used that $\phi$ is linear in $\theta$. The optimality condition yields
\begin{align}
0 =&~ \nabla \phi(\cdot,a_{l-1}^k) (\phi(\theta_{l}^{k+1}-\theta^k, a_{l-1}^k) + g_l^k) + \frac{1}{\tau_\theta}(\theta_{l}^{k+1} - \theta_{l}^{k})
\end{align}
Again, due to the linearity of $\phi$ in $\theta$, one has
\begin{equation} 
  \phi(\theta,a_{l-1}^k) = (\nabla \phi(\cdot,a_{l-1}^k))^*(\theta), 
\end{equation}
where $^*$, denotes the adjoint of a linear operator.
We conclude
\begin{align}
&~0 = \nabla \phi(\cdot,a_{l-1}^k) (\nabla \phi(\cdot,a_{l-1}^k))^*(\theta_{l}^{k+1}-\theta^k) + \nabla \phi(\cdot,a_{l-1}^k) g_l^k + \frac{1}{\tau_\theta}(\theta_{l}^{k+1} - \theta_{l}^{k}),\nonumber\\
\label{eq:helper}
&~\Rightarrow \left(\frac{1}{\tau_\theta} I + \nabla \phi(\cdot,a_{l-1}^k) (\nabla \phi(\cdot,a_{l-1}^k))^*\right)(\theta_{l}^{k+1} - \theta_{l}^{k}) = -\nabla \phi(\cdot,a_{l-1}^k) g_l^k = -\tau \nabla_{\theta_l} J(\vec{\theta}^k; X,y),
\end{align}
which yields the assertion. 
\end{proof}
\begin{proof}[Proof of Proposition~3]
Under the assumption that $\theta^k$ converges, $\theta^k \rightarrow \hat \theta$, one finds that $a_l^k \rightarrow \hat a_l$ and $z_l^k\rightarrow \hat z_l=\phi(\hat \theta_l, \hat a_{l-1})$ converge to the respective activations of the parameters $\hat \theta$ due to the forward pass and the continuity of the network. 
As we assume $J(\cdot; X,y)$ to be continuously differentiable, we deduce from \eqref{eq:helper} that $\lim_{k\to \infty} \nabla_{\theta_l} J(\vec{\theta}^k; X,y) = 0$ for all $l=1,\dots, L-2$. The parameters of the last layer $\theta_{L-1}$ are treated explicitly anyways, such that the above equation also holds for $l=L-1$, which then yields the assertion. 
\end{proof}

\begin{proof}[Proof of Proposition~4]
As the matrices 
\begin{align}
M_l^k := \frac{1}{\tau_\theta}I + (\nabla \phi(\cdot,a_{l-1}^k)) (\nabla \phi(\cdot,a_{l-1}^k))^*
\end{align}
(with the convention $M_{L-1}^k=I$) are positive definite, so are their inverses, and the claim that $\vec{\theta}^{k+1}-\vec{\theta}^k$ is a descent direction is immediate,
\begin{align}
\langle \theta_l^{k+1}- \theta_l^k, -\nabla_{\theta_l}J(\vec{\theta}^k; Y,x) \rangle = \tau \langle (M_l^k)^{-1} \nabla_{\theta_l}J(\vec{\theta}^k; Y,x), \nabla_{\theta_l}J(\vec{\theta}^k; Y,x) \rangle.
\end{align}

We still have to guarantee that this update direction does not become orthogonal to the gradient in the limit $k\to \infty$.
The largest eigenvalue of $(M_l^k)^{-1}$ is bounded from above by $\tau_\theta$. If the $a_{l-1}^k$ remain bounded, then so does $\nabla \phi(\cdot,a_{l-1}^k)$ and the largest eigenvalue of $\nabla
\phi(\cdot,a_{l-1}^k)\nabla \phi(\cdot,a_{l-1}^k)^*$ must be bounded by some constant $\tilde{c}$. Therefore, the smallest eigenvalue of $(M_l^k)^{-1}$ must remain bounded from from below by
$(\frac{1}{\tau_\theta} + \tilde{c})^{-1}$. Abbreviating $v = \nabla_{\theta_l}J(\vec{\theta}^k; Y,x)$, it follows that
\begin{equation}
\begin{aligned}
\cos(\alpha^k) =&~ \frac{\tau \langle (M_l^k)^{-1} v,v \rangle }{\tau \|(M_l^k)^{-1} v\| \|v\|}\\
\geq &~ \frac{\lambda_{\min}((M_l^k)^{-1})\|v\|^2}{\|(M_l^k)^{-1} v\| \|v\|}\\
\geq &~ \frac{\lambda_{\min}((M_l^k)^{-1})}{\lambda_{\max}((M_l^k)^{-1})}
\end{aligned}
\end{equation}
which yields the assertion. 
\end{proof}

\begin{proof}[Proof of Proposition~5]
According to \cite[p. 109, Thm. 5.3]{Nocedal2006} and \cite[p. 106, Thm. 5.2]{Nocedal2006} the $k$-th iteration $x_k$ of the CG method for solving a linear system $Ax=b$ with starting point $x_0=0$ meets
\begin{align}
x_k = \arg \min_{x \in \text{span}(b, Ab, \hdots, A^{k-1}b)}\frac{1}{2}\langle x, Ax \rangle - \langle b, x\rangle,
\end{align}
i.e. is optimizing over an order-$k$ Krylov subspace.
The starting point $x_0=0$ can be chosen without loss of generality.
Suppose the starting point is $\tilde x_0 \neq 0$, then one can optimize the variable $x = \tilde x - \tilde x_0$ with a starting point $x_0 = 0$ and $b = \tilde b + A \tilde x_0$.

We will assume that the CG iteration has not converged yet as the claim for a fully converged CG iteration immediately follow from Proposition 4. Writing the vectors $b, Ab, \hdots, A^{k-1}b$ as columns of a matrix $\mathcal{K}_k$, the condition $x \in \text{span}(b, Ab, \hdots, A^{k-1}b)$ can equivalently be expressed as $x = \mathcal{K}_k \alpha$ for some $\alpha \in \mathbb{R}^k$. In terms of $\alpha$ our minimization problem becomes
\begin{align}
x_k = \mathcal{K}_k \alpha = \arg \min_{\alpha \in \mathbb{R}^k}\frac{1}{2}\langle \alpha , (\mathcal{K}_k)^TA\mathcal{K}_k \alpha \rangle - \langle (\mathcal{K}_k)^T b, \alpha \rangle,
\end{align}
leading to the optimality condition 
\begin{equation}
\begin{aligned}
0 =&~ (\mathcal{K}_k)^TA\mathcal{K}_k \alpha - (\mathcal{K}_k)^T b ,\\
\Rightarrow x_k =&~ \mathcal{K}_k((\mathcal{K}_k)^TA\mathcal{K}_k)^{-1}(\mathcal{K}_k)^T b.
\end{aligned}
\end{equation}
Note that $A$ is symmetric positive definite and can therefore be written as $\sqrt{A}^T \sqrt{A}$, leading to 
\begin{align}
(\mathcal{K}_k)^TA\mathcal{K}_k = (\sqrt{A} \mathcal{K}_k)^T(\sqrt{A}\mathcal{K}_k)
\end{align}
being symmetric positive definite. Hence, the matrix $((\mathcal{K}_k)^TA\mathcal{K}_k)^{-1}$ is positive definite, too, and
\begin{equation}
\begin{aligned}
\langle x_k, b \rangle =&~ \langle \mathcal{K}_k((\mathcal{K}_k)^TA\mathcal{K}_k)^{-1}(\mathcal{K}_k)^T b, b \rangle \\
=&~ \langle ((\mathcal{K}_k)^TA\mathcal{K}_k)^{-1}(\mathcal{K}_k)^T b, (\mathcal{K}_k)^T b \rangle > 0.
\end{aligned}
\end{equation}
Note that $(\mathcal{K}_k)^T b $ is nonzero if $b$ is nonzero, as $\|b\|^2$ is its first entry. 

To translate the general analysis of the CG iteration to our specific case, using any number of CG iterations we find that an approximate solution $\tilde{v}_l^k$ of 
\begin{align}
M_l^k v_l^k = -\nabla_{\theta_l}J(\vec{\theta}^k;X,y)
\end{align}
leads to 
$$\langle \tilde{v}_l^k, -\nabla_{\theta_l}J(\vec{\theta}^k;X,y) \rangle >0,$$
i.e., to $\tilde{v}_l^k$ being a descent direction. 
\end{proof}

\section{Proximal operator for linear transfer functions}
\label{sec:prox_linear}
In order to update the parameters $\theta_l$ of the linear transfer function, we have to solve the problem \eqref{eq:wba_update},
\begin{equation}
    \theta^{k+1} = \argmin_{\theta} ~ \frac{1}{2} \norm{\phi(\theta, a^k) - z^{k+\nicefrac{1}{2}}}^2 + \frac{1}{2 \tau_\theta} \norm{\theta - \theta^{k}}^2 . 
\end{equation}

Since we assume that $\phi$ is linear in $\theta$ for a fixed $a^k$, there exists a matrix $A^k$ such that 
\begin{equation}
    \text{vec}(\theta^{k+1}) = \argmin_{\theta} ~ \frac{1}{2} \norm{A^k\text{vec}(\theta) - \text{vec}(z^{k+\nicefrac{1}{2}})}^2 + \frac{1}{2 \tau_\theta} \norm{\text{vec}(\theta) - \text{vec}(\theta^{k})}^2,
\end{equation}
and the optimality condition yields
\begin{equation}
    \text{vec}(\theta^{k+1}) = (I + \tau_\theta (A^k)^TA^k)^{-1}(\text{vec}(\theta^{k}) + (A^k)^T\text{vec}(z^{k+\nicefrac{1}{2}})).
\end{equation}
In the main paper we sometimes use the more abstract but also more concise notion of $\nabla \phi(\cdot, a^k)$, which represents the linear operator
\begin{equation}
  \nabla \phi(\cdot, a^k)(Y) = \text{vec}^{-1}((A^k)^T\text{vec}(Y)).
\end{equation}

To also make the above more specific, consider the example of $\phi(\theta, a^k) = \theta a^k$. In this case the variable $\theta$ may remain in a matrix form and the solution of the proximal mapping becomes
\begin{equation}
  \begin{aligned}
    \theta^{k+1} = \left( z^{k+\nicefrac{1}{2}} \, (a^k)^\top  + \frac{1}{\tau_\theta} \theta^k \right) \left( a^k (a^k)^\top + \frac{1}{\tau_\theta} I \right)^{-1} . 
  \end{aligned}
  \label{eq:wb_updateSupp}
\end{equation}
Since $a^k \in \bbR^{n \times N}$ for some layer size $n$ and batch size $N$, the size of the linear system is independent of the batch size.

\end{document}